\documentclass[journal]{IEEEtran}
\usepackage{etex}
\reserveinserts{100}
\usepackage{mathtools}
\usepackage{graphics, amsthm, times, amsfonts, graphicx, amssymb, cite}
\usepackage[outdir=./]{epstopdf}
\usepackage{tikz}
\usetikzlibrary{shapes,arrows}
\usepackage{pgfplots}
\usepackage{color}
\usepackage{setspace}
\usepackage{hyperref}
\usepackage{multirow}
\usepackage{rotating}
\usepackage{comment}
\usepackage{flushend}
\usepackage{caption}
\usepackage{subcaption}
\usepackage[affil-it]{authblk}
\usepackage{bm}
\usepackage{algorithm}
\usepackage{algorithmic}
\usepackage{enumerate}
\usepackage{enumitem}
\usepackage{morefloats}
\usepackage{setspace}

\input{mysymbol.sty}
\usepackage{needspace}

% \nbsubsubsection{} provides a numbered subsection in bold without a line break. The section will contain at least three lines of text before a pagebreak

% \myparagraph provides a paragraph title in italics. 

% \myindenetedparagraph provides an indented paragraph with title in italics.

% \myparagraphtc provides a paragraph title in italics. It adds an enter to the table of contents

%\renewcommand{\baselinestretch}{.995}

\newcommand{\norm}[1]{\ensuremath{\left\| #1 \right\|}}

\def\E{\mathbb{E}}

%\addtolength{\textwidth}{10mm}
%\addtolength{\evensidemargin}{-5mm}
%\addtolength{\oddsidemargin}{-5mm}
%\addtolength{\textheight}{10mm} 
%\addtolength{\topmargin}{-5mm}

%\renewcommand \blue[1]{}  \renewcommand \red[1]{}   

\newtheorem{assumption}{\hspace{0pt}\bf Assumption}
\newtheorem{lemma}{\hspace{0pt}\bf Lemma}

\newtheorem{example}{\hspace{0pt}\bf Example}

\newtheorem{theorem}{\hspace{0pt}\bf Theorem}

\newtheorem{remark}{\hspace{0pt}\bf Remark}

\newtheorem{definition}{\hspace{0pt}\bf Definition}

% Title.
% ------

\title{Learning Optimal Resource Allocations in \\ Wireless Systems}
%\title{Learning in Wireless Communication \\ Systems via Duality}
%
% Single address.
% ---------------

\author{Mark Eisen$^{*}$ \quad Clark Zhang$^{*}$ \quad Luiz F. O. Chamon$^{*}$ \quad Daniel D. Lee$^{\dagger}$ \quad Alejandro Ribeiro$^{*}$
\thanks{{Supported by ARL DCIST CRA W911NF-17-2-0181 and Intel Science and Technology Center for Wireless Autonomous Systems. The authors are with the $^{*}$Department of Electrical and Systems Engineering, University of Pennsylvania and $^{\dagger}$Department of Electrical and Computer Engineering, Cornell Tech. Email: maeisen@seas.upenn.edu, clarkz@seas.upenn.edu, luizf@seas.upenn.edu, ddl46@cornell.edu, aribeiro@seas.upenn.edu}.}
}

%%%%%%%%%%%%%%%%%%%%%%%%%%%%%%%%%%%%%%%%%%%%%%%%%%%%%%%%%%%%%%%%%%%%%%%%%%%%%%%%%%%%%%%%
% BEGIN: Definitions of long mathematical expressions
%%%%%%%%%%%%%%%%%%%%%%%%%%%%%%%%%%%%%%%%%%%%%%%%%%%%%%%%%%%%%%%%%%%%%%%%%%%%%%%%%%%%%%%% 
% Resource allocation functions
\def \fph  {\bbf\big(\bbp(\bbh), \bbh\big)}
\def \Efph {\mbE\big[\fph\big]}
%
% Optimal resource allocation functions

%
% Slater's condition resource allocation functions
\def \fphzero  {\bbf\big(\bbp_0(\bbh),\bbh\big)}
\def \Efphzero {\mbE\big[\fphzero\big]}
%
% Parametrized resource allocation functions
\def \fphih  {\bbf\big(\bbphi(\bbh,\bbtheta),\bbh\big)}
\def \Efphih {\mbE\big[\fphih\big]}
%
% Parametrized resource allocation functions indexed by sample index k
\def \fpihk {\bbf\big(\bbh_k,\bbphi(\bbh_k,\bbtheta_k)\big)}
%
% Lagrangian
\def \Lagphi {\ccalL_{\bbphi}(\bbtheta,\bbx, \bblambda, \bbmu)}
%
%%%%%%%%%%%%%%%%%%%%%%%%%%%%%%%%%%%%%%%%%%%%%%%%%%%%%%%%%%%%%%%%%%%%%%%%%%%%%%%%%%%%%%%%
% END: Definitions of long mathematical expressions
%%%%%%%%%%%%%%%%%%%%%%%%%%%%%%%%%%%%%%%%%%%%%%%%%%%%%%%%%%%%%%%%%%%%%%%%%%%%%%%%%%%%%%%% 

\begin{document}

\thispagestyle{empty}
\maketitle

\begin{abstract}
This paper considers the design of optimal resource allocation policies in wireless communication systems which are generically modeled as a functional optimization problem with stochastic constraints. These optimization problems have the structure of a learning problem in which the statistical loss appears as a constraint, motivating the development of learning methodologies to attempt their solution. To handle stochastic constraints, training is undertaken in the dual domain. It is shown that this can be done with small loss of optimality when using near-universal learning parameterizations. In particular, since deep neural networks (DNN) are near-universal their use is advocated and explored. DNNs are trained here with a model-free primal-dual method that simultaneously learns a DNN parametrization of the resource allocation policy and optimizes the primal and dual variables. Numerical simulations demonstrate the strong performance of the proposed approach on a number of common wireless resource allocation problems.
\end{abstract}

\begin{keywords}
wireless systems, deep learning, resource allocation, strong duality
\end{keywords}

%%%%%%%%%%%%%%%%%%%%%%%%%
%%%%%%% S E C T I O N %%%%%%%%%%%
%%%%%%%%%%%%%%%%%%%%%%%%%%
%%%%%%%%%%%%%%%%%%%%%%%%%%%%%%%%%%%%%%%%%%%%%%%%%%%%%%%%%%%%%%%%%%%%%

\section{Introduction}
The defining feature of wireless communication is fading and the role of optimal wireless system design is to allocate resources across fading states to optimize long term system properties. Mathematically, we have a random variable $\bbh$ that represents the instantaneous fading environment, a corresponding instantaneous allocation of resources $\bbp(\bbh)$, and an instantaneous performance outcome $\fph$ resulting from the allocation of resources $\bbp(\bbh)$ when the channel realization is $\bbh$. The instantaneous system performance tends to vary too rapidly from the perspective of end users for whom the long term average $\bbx = \Efph$ is a more meaningful metric. This interplay between instantaneous allocation of resources and long term performance results in distinctive formulations where we seek to maximize a utility of the long term average $\bbx$ subject to the constraint $\bbx = \Efph$. Problems of this form range from the simple power allocation in wireless fading channels -- the solution of which is given by water filling -- to the optimization of frequency division multiplexing \cite{wang2011resource}, beamforming \cite{sidiropoulos2006transmit, bazerque2008distributed}, and random access \cite{hu2011adaptive,hu2012optimal}.

Optimal resource allocation problems are as widespread as they are challenging. This is because of the high dimensionality that stems from the variable $\bbp(\bbh)$ being a function over a dense set of fading channel realizations and the lack of convexity of the constraint $\bbx = \Efph$. For resource allocation problems, such as interference management, heuristic methods have been developed \cite{shi2011iteratively, chen2011round, wu2013flashlinq}. Generic solution methods are often undertaken in the Lagrangian dual domain. This is motivated by the fact that the dual problem is not functional, as it has as many variables as constraints, and is always convex whether the original problem is convex or not. A key property that enables this solution is the lack of duality gap, which allows dual operation without loss of optimality. The duality gap has long being known to be null for convex problems -- e.g., the water level in water filling solutions is a dual variable -- and has more recently being shown to be null under mild technical conditions despite the presence of the nonconvex constraint~$\bbx = \Efph$~\cite{yu2006dual, ribeiro2012optimal}. This permits dual domain operation in a wide class of problems and has lead to formulations that yield problems that are {\it more} tractable, although not necessarily tractable without resorting to heuristics~\cite{zhang2006stochastic, GatsisEtal15,wang2016dynamic, eisen2018learning, liu2001opportunistic, eryilmaz2007fair, ntranos2009multicast}. 

The inherent difficulty of resource allocation problems makes the use of machine learning tools appealing. One may collect a training set composed of optimal resource allocations $\bbp^*(\bbh_k)$ for some particular instances $\bbh_k$ and utilize the learning parametrization to interpolate solutions for generic instances $\bbh$. The bottleneck step in this learning approach is the acquisition of the training set. In some cases this set is available by reverse engineering as it is possible to construct a problem having a given solution \cite{farsad2017detection, sun2018limited}. In some other cases heuristics can be used to find approximate solutions to construct a training set \cite{sun2017learning, lei2017deep, lee2018deep}. This limits the performance of the learning solution to the performance of the heuristic, though the methodology has proven to work well at least in some particular problems. 

Instead of acquiring a training set, one could exploit the fact that the expectation $\Efph$ has a form that is typical of learning problems. Indeed, in the context of learning, $\bbh$ represents a feature vector, $\bbp(\bbh)$ the regression function to be learned, $\fph$ a loss function to be minimized, and the expectation $\Efph$ the statistical loss over the distribution of the dataset. We may then learn without labeled training data by directly minimizing the statistical loss with stochastic optimization methods which merely observe the loss $\fph$ at sampled pairs $(\bbh,\bbp(\bbh))$. This setting is typical of, e.g., reinforcement learning problems \cite{sutton1998reinforcement}, and is a learning approach that has been taken in several {\it unconstrained} problems in wireless optimization \cite{de2017decentralized, o2017physical, o2017introduction, ye2018deep}. In general, wireless optimization problems {\it do} have constraints as we are invariably trying to balance capacity, power consumption, channel access, and interference. Still, the fact remains that wireless optimization problems have a structure that is inherently similar to learning problems. This realization is the first contribution of this paper:

\bi [(C1)] Parametrizing the resource allocation function $\bbp(\bbh)$ yields an optimization problem with the structure of a learning problem in which the statistical loss appears as a constraint (Section \ref{sec_problem}). \ei

\noindent This observation is distinct from existing work in learning for wireless resource allocation. Whereby existing works apply machine learning methods to wireless resource allocation, such as via supervised training, here we identify that the wireless resource allocation is \emph{itself} a statistical learning problem. This motivates the use of learning methods to directly solve the resulting optimization problems bypassing the acquisition of a training set. To do so, it is natural to operate in the dual domain where constraints are linearly combined to create a weighted objective (Section \ref{sec_dual}). The first important question that arises in this context is the price we pay for learning in the dual domain. Our second contribution is to show that this question depends on the quality of the learning parametrization. In particular, if we use learning representations that are near universal---meaning that they can approximate any function up to a specified accuracy (Definition \ref{def_universal})----we can show that dual training is close to optimal:

\bi [(C2)]The duality gap of learning problems in wireless optimization is small if the learning parametrization is nearly universal (Section \ref{sec_analysis}). More formally, the duality gap is $\ccalO(\eps)$ if the learning parametrization can approximate arbitrary functions with error~$\ccalO(\eps)$~(Theorem \ref{theorem_param_duality}).\ei 

\noindent A second question that we address is the design of training algorithms for optimal resource allocation in wireless systems. The reformulation in the dual domain gives natural rise to a gradient-based, primal-dual learning method (Section \ref{sec_primal_dual}). The primal-dual method cannot be implemented directly, however, because computing gradients requires unavailable model knowledge. This motivates a third contribution:

%\red{\bi [(C3)] We propose a stochastic primal-dual method to train learning parameterizations in optimal wireless system design (Section \ref{sec_primal_dual}).\ei} \blue{Mark: I don't feel this is a contribution. The algorithm we propose is the model-free version in C5}

\bi [(C3)] We introduce a model-free learning approach, in which gradients are estimated by sampling the model functions and wireless channel (Section \ref{sec_model_free}).\ei

\noindent This model-free approach additionally includes the policy gradient method for efficiently estimating the gradients of a function of a policy (Section \ref{sec_policy_grad}). We remark that since the optimization problem is not convex, the primal-dual method does not converge to the optimal solution of the learning problem but to a stationary point of the KKT conditions \cite{boyd2004convex}. This is analogous to unconstrained learning where stochastic gradient descent is known to converge only to a local minima. 

The quality of the learned solution inherently depends on the ability of the learning parametrization to approximate the optimal resource allocation function. In this paper we advocate for the use of neural networks:

\bi [(C4)] We consider the use of deep neural networks (DNN) and conclude that since they are universal parameterizations, they can be trained in the dual domain without loss of optimality~(Section \ref{sec_dnn}).\ei

\noindent Together, the Lagrangian dual formulation, model-free algorithm, and DNN parameterization provide a practical means of learning in resource allocation problems with near-optimality. We conclude with a series of simulation experiments on a set of common wireless resource allocation problems, in which we demonstrate the near-optimal performance of the proposed DNN learning approach (Section \ref{sec_simulation_results}).

%%%%%%%%%%%%%%%%%%%%%%%%%%%%%%%%%%%%%%%%%%%%%%%%%%%%%%%%%%%%%%%%%
%%%%%%%%%%%%%%%%%%%%%%%%%
%%%%%%% S E C T I O N %%%%%%%%%%%
%%%%%%%%%%%%%%%%%%%%%%%%%%
%%%%%%%%%%%%%%%%%%%%%%%%%%%%%%%%%%%%%%%%%%%%%%%%%%%%%%%%%%%%%%%%%%%%%

\section{Optimal Resource Allocation in Wireless Communication Systems}\label{sec_problem}

Let $\bbh \in \ccalH \subseteq \reals_+^n$ be a random vector representing a collection of~$n$ stationary wireless fading channels drawn according to the probability distribution~$m(\bbh)$. Associated with each fading channel realization, we have a resource allocation vector $\bbp(\bbh)\in\reals^m$ and a function $\bbf:\reals^m\times\reals^n\to\reals^u$. The components of the vector valued function $\fph$ represent performance metrics that are associated with the allocation of resources $\bbp(\bbh)$ when the channel realization is $\bbh$. In fast time varying fading channels, the system allocates resources instantaneously but users get to experience the average performance across fading channel realizations. This motivates considering the vector ergodic average~$\bbx = \Efph \in\reals^u$, which, for formulating optimal wireless design problems, is relaxed to the inequality
\begin{align}\label{eqn_ergo_avg}
   \bbx \leq \Efph.
\end{align}
In \eqref{eqn_ergo_avg}, we interpret $\Efph$ as the level of service that is available to users and $\bbx$ as the level of service utilized by users. In general we will have  $\bbx = \Efph$ at optimal operating points, but this is not required a priori.

The goal in optimally designed wireless communication systems is to find the instantaneous resource allocation $\bbp(\bbh)$ that optimizes the performance metric $\bbx$ in some sense. To formulate this problem mathematically we introduce a vector utility function $\bbg:\reals^u\to\reals^r$ and a scalar utility function $g_0:\reals^u\to\reals$, taking values $\bbg(\bbx)$ and $g_0(\bbx)$, that measure the value of the ergodic average $\bbx$. We further introduce the set $\ccalX\subseteq\reals^u$ and $\ccalP \subseteq \ccalM$, where~$\ccalM$ is the set of functions integrable with respect to~$m(\bbh)$, to constrain the values that can be taken by the ergodic average and the instantaneous resource allocation, respectively. We assume~$\ccalP$ contains bounded functions, i.e., that the resources being allocated are finite. With these definitions, we let the optimal resource allocation problem in wireless communication systems be a program of the form
\begin{alignat}{3} \label{eq_problem}
   P^* := &  \max_{\bbp(\bbh),\bbx} \ && g_0(\bbx),             \nonumber \\
        &  \st                    \ && \bbx       \leq    \Efph,   \nonumber \\
        &                         \ && \bbg(\bbx) \geq \bbzero, \ 
                                       \bbx       \in  \ccalX,  \
                                       \bbp \in  \ccalP.  %   
\end{alignat}
In \eqref{eq_problem} the utility $g_0 (\bbx)$ is the one we seek to maximize while the utilities $\bbg(\bbx)$ are required to be nonnegative. The constraint $\bbx \leq \Efph$ relates the instantaneous resource allocations with the long term average performances as per \eqref{eqn_ergo_avg}. The constraints $\bbx \in \ccalX$ and $\bbp \in \ccalP$ are set restrictions on $\bbx$ and $\bbp$. The utilities $g_0 (\bbx)$ and $\bbg(\bbx)$ are assumed to be concave and the set $\ccalX$ is assumed to be convex. However, the function~$\bbf(\cdot,\bbh)$ is not assumed convex or concave and the set $\ccalP$ is not assumed to be convex either. In fact, the realities of wireless systems make it so that they are typically non-convex~\cite{ribeiro2012optimal}. We present three examples below to clarify ideas and proceed to motivate and formulate learning approaches for solving \eqref{eq_problem}. 

%%%%%%%%%%%%%%%%%%%%%%%%%%%%%%%%%%%%%%%%%%%%%%%%%%%%%%%%%%%%%%%%%%%%%%%%%%%%%%%%%%%%%%%%%
%%%%   E   X   A   M   P   L   E   %%%%%%%%%%%%%%%%%%%%%%%%%%%%%%%%%%%%%%%%%%%%%%%%%%%%%%
%%%%%%%%%%%%%%%%%%%%%%%%%%%%%%%%%%%%%%%%%%%%%%%%%%%%%%%%%%%%%%%%%%%%%%%%%%%%%%%%%%%%%%%%%
%
\begin{example}[Point-to-point wireless channel]\label{example_point_to_point} \normalfont 
In a point-to-point channel we measure the channel state $h$ and allocate power $p(h)$ to realize a rate $c(p(h);h) = \log(1+hp(h))$ assuming the use of capacity achieving codes. The metrics of interest are the average rate $c = \mbE_h[c(p(h);h)] = \mbE_h [\log(1+hp(h))]$ and the average power consumption $p = \mbE_h[p(h)]$. These two constraints are of the ergodic form in \eqref{eqn_ergo_avg}. We can formulate a rate maximization problem subject to power constraints with the utility $g_0(\bbx) = g_0 (c,p) = c$ and the set $\ccalX = \{p: 0 \leq p \leq p_0\}$. Observe that the utility is concave (linear) and the set $\ccalX$ is convex (a segment). In this particular case the instantaneous performance functions $\log(1+hp(h))$ and $p(h)$ are concave. A similar example in which the instantaneous performance functions are not concave is when we use a set of adaptive modulation and coding modes. In this case the rate function $c(p(h);h)$ is a step function \cite{ribeiro2012optimal}. \end{example}

%%%%%%%%%%%%%%%%%%%%%%%%%%%%%%%%%%%%%%%%%%%%%%%%%%%%%%%%%%%%%%%%%%%%%%%%%%%%%%%%%%%%%%%%
%%%   E   X   A   M   P   L   E   %%%%%%%%%%%%%%%%%%%%%%%%%%%%%%%%%%%%%%%%%%%%%%%%%%%%%%
%%%%%%%%%%%%%%%%%%%%%%%%%%%%%%%%%%%%%%%%%%%%%%%%%%%%%%%%%%%%%%%%%%%%%%%%%%%%%%%%%%%%%%%%
%
\begin{example}[Multiple access interference channel]\label{example_interference} \normalfont
A set of $m$ terminals communicates with associated receivers. The channel linking terminal $i$ to the its receiver is $h^{ii}$ and the interference channel to receiver $j$ is given by $h^{ji}$. The power allocated in this channel is $p^i(\bbh)$ where $\bbh=[h^{11}; h^{12};\ldots;h^{mm}]$. The instantaneous rate achievable by terminal $i$ depends on the signal to interference plus noise ratio (SINR) $c^i(\bbp(\bbh);\bbh) = h^{ii} p^i (\bbh) /[ 1 + \sum_{j\neq i} h^{ji} p^j (\bbh) ]$. Again, the quantity of interest for each terminal is the long term rate which, assuming use of capacity achieving codes, is
\begin{align}\label{eq_interference_problem1}
  x^i \leq \mbE_{\bbh} \bigg[ \log \bigg(1 + \frac{h^{ii} p^i(\bbh)}
              {1 + \sum_{j \neq i} h^{ji} p^j(\bbh)}\bigg)\bigg].
\end{align}
The constraint in \eqref{eq_interference_problem1} has the form of \eqref{eqn_ergo_avg} as it relates instantaneous rates with long term rates. The problem formulation is completed with a set of average power constraints $p^i = \mbE_\bbh[p^i(\bbh)]$. Power constraints can be enforced via the set $\ccalX = \{p: 0 \leq p \leq p_0\}$ and the utility $g_0$ can be chosen to be the weighted sum rate $g_0(\bbx) = \sum_{i} w^i x^i$ or a proportional fair utility  $g_0(\bbx) = \sum_{i} \log(x^i)$. Observe that the utility is concave but the instantaneous rate function $c^i(\bbp(\bbh);\bbh)$ is not convex. A twist on this problem formulation is to make $\ccalP=\{0,1\}^m$ in which case individual terminals are either active or not for a given channel realization. Although this set $\ccalP$ is not convex, it is allowed in \eqref{eq_problem}. \end{example}

%%%%%%%%%%%%%%%%%%%%%%%%%%%%%%%%%%%%%%%%%%%%%%%%%%%%%%%%%%%%%%%%%%%%%%%%%%%%%%%%%%%%%%%%
%%%   E   X   A   M   P   L   E   %%%%%%%%%%%%%%%%%%%%%%%%%%%%%%%%%%%%%%%%%%%%%%%%%%%%%%
%%%%%%%%%%%%%%%%%%%%%%%%%%%%%%%%%%%%%%%%%%%%%%%%%%%%%%%%%%%%%%%%%%%%%%%%%%%%%%%%%%%%%%%%
%
\begin{example}[Time division multiple access]\label{example_fdma} \normalfont
In Example \ref{example_interference} terminals are allowed to transmit simultaneously. Alternatively, we can request that only one terminal be active at any point in time. This can be modeled by introducing the scheduling variable $\alpha^i(\bbh)\in\{0,1\}$ and rewriting the rate expression in~\eqref{eq_interference_problem1} as
\begin{align}\label{eq_tdma_problem1}
  x^i \leq \mbE_{\bbh} \Big[ \alpha^i(\bbh) \log \big(1 + h^{i} p^i(\bbh) \big)\Big],
\end{align}
where the interference term does not appear because we restrict channel occupancy to a single terminal. To enforce this constraint we define the set $\ccalP:=\{\alpha^i(\bbh):\alpha^i(\bbh)\in\{0,1\}, \sum_{i}\alpha^i(\bbh)\leq 1\}$. This is a problem formulation in which, different from Example  \ref{example_interference}, we not only allocate power but channel access as well. \end{example}

%%%%%%%%%%%%%%%%%%%%%%%%%%%%%%%%%%%%%%%%%%%%%%%%%%%%%%%%%%%%%%%%%%%%%%%%%%%%%%%%%%%%%%%%
%%%   S   E   C   T   I   O   N   %%%%%%%%%%%%%%%%%%%%%%%%%%%%%%%%%%%%%%%%%%%%%%%%%%%%%%
%%%%%%%%%%%%%%%%%%%%%%%%%%%%%%%%%%%%%%%%%%%%%%%%%%%%%%%%%%%%%%%%%%%%%%%%%%%%%%%%%%%%%%%%
%
\subsection{Learning formulations}

The problem in \eqref{eq_problem}, which formally characterizes the optimal resource allocation policies for a diverse set of wireless problems, is generally a very difficult optimization problem to solve. In particular, two well known challenges in solving \eqref{eq_problem} directly are: 

   \bi[{(i)}]   The optimization variable $\bbp$ is a function. 
   \i [{(ii)}]  The channel distribution $m(\bbh)$ is unknown. \ei

\noindent Challenge (ii) is of little concern as it can be addressed with stochastic optimization algorithms. Challenge (i) makes \eqref{eq_problem} a functional optimization problem, which, compounded with the fact that \eqref{eqn_ergo_avg} defines a nonconvex constraint, entails large computational complexity. This is true even if we settle for a local minimum because we need to sample the $n$-dimensional space $\ccalH$ of fading realizations $\bbh$. If each channel is discretized to $d$ values the number of resource allocation variables to be determined is $m d^n$. As it is germane to the ideas presented in this paper, we point that \eqref{eq_problem} is known to have null duality gap \cite{ribeiro2012optimal}. This, however, does not generally make the problem easy to solve and moreover requires having model information.

This brings a third challenge in solving \eqref{eq_problem}, namely the availability of the wireless system functions:

   \bi [{(iii)}] The form of the instantaneous 
                performance function $\fph$, utility $g_0(\bbx)$, and constraint $\bbg(\bbx)$ may not be known. \ei

\noindent As we have seen in Examples \ref{example_point_to_point}-\ref{example_fdma}, the function $\fph$ models instantaneous achievable rates. Although these functions {\it may} be available in ideal settings, there are difficulties in measuring the radio environment that make them uncertain. This issue is often neglected but it can cause significant discrepancies between predicted and realized performances. Moreover, with less idealized channel models or performance rate functions---such as bit error rate---reliable models may even not be available to begin with. While the functions $g_0(\bbx)$ and $\bbg(\bbx)$ are sometimes known or designed by the user, we assume they are not here for complete generality.

Challenges (i)-(iii) can all be overcome with the use of a learning formulation. This is accomplished by introducing a parametrization of the resource allocation function so that for some $\bbtheta \in \reals^q$ we make 
\begin{align}\label{eqn_learning_param}
   \bbp(\bbh) = \bbphi(\bbh,\bbtheta).
\end{align}
With this parametrization the ergodic constraint in \eqref{eqn_ergo_avg} becomes
\begin{align}\label{eqn_ergo_avg_learning}
   \bbx \leq \Efphih
\end{align}
If we now define the set $\Theta := \{ \bbtheta \mid \bbphi(\bbh,\bbtheta) \in \ccalP\}$, the optimization problem in \eqref{eq_problem} becomes one in which the optimization is over $\bbx$ and $\bbtheta$
\begin{alignat}{3} \label{eq_param_problem}
   P^*_{\bbphi} := &  \max_{\bbtheta,\bbx}   \ && g_0 (\bbx),             \nonumber \\
        &  \st                    \ &&  \bbx \leq  \Efphih, \nonumber \\
        &                         \ && \bbg(\bbx) \geq \bbzero, \ 
                                       \bbx       \in  \ccalX,  \
                                       \bbtheta \in \Theta.  %   
\end{alignat}
Since the optimization is now carried over the parameter $\bbtheta\in\reals^q$ and the ergodic variable $\bbx\in\reals^u$, the number of variables in \eqref{eq_param_problem} is $q + u$. This comes at a loss of of optimality because \eqref{eqn_learning_param} restricts resource allocation functions to adhere to the parametrization $\bbp(\bbh) = \bbphi(\bbh,\bbtheta)$. E.g., if we use a linear parametrization $\bbp(\bbh)=\bbtheta^T\bbh$ it is unlikely that the solutions of \eqref{eq_problem} and \eqref{eq_param_problem} are close. In this work, we focus our attention on a widely-used class of parameterizations we define as \emph{near-universal}, which are able to model any function in $\ccalP$ to within a stated accuracy. We present this formally in the following definition.
\begin{definition}\label{def_universal}
A parameterization $\bbphi(\bbh,\bbtheta)$ is an $\epsilon$-universal parameterization of functions in $\ccalP$ if, for some $\eps > 0$, there exists for any $\bbp \in \ccalP$ a parameter $\bbtheta \in \Theta$ such that
\begin{equation}\label{eq_def_bound}
	 \E \left\| \bbp(\bbh) - \bbphi(\bbh,\bbtheta) \right\|_{\infty}
		\leq \epsilon.
\end{equation}
\end{definition}
A number of popular machine learning models are known to exhibit the universality property in Definition \ref{def_universal}, such as radial basis function networks (RBFNs) \cite{park1991universal} and reproducing kernel Hilbert spaces (RKHS) \cite{sriperumbudur2010relation}. This work focuses in particular on deep neural networks (DNNs), which can be shown to exhibit a universal function approximation property \cite{hornik1991approximation} and are observed to work remarkably well in practical problems---see, e.g, \cite{krizhevsky2012imagenet,long2015fully}. The specific details regarding the use of DNNs in the proposed learning framework of this paper are discussed in Section \ref{sec_dnn}.

While the reduction of the dimensionality of the optimization space is valuable, the most important advantage of \eqref{eq_param_problem} is that we can use training to bypass the need to estimate the distribution $m(\bbh)$ and the functions $\fph$. The idea is to learn over a time index $k$ across observed channel realizations $\bbh_k$ and probe the channel with tentative resource allocations $\bbp_k(\bbh_k) = \bbphi(\bbh_k,\bbtheta_k)$. The resulting performance $\fpihk$ is then observed and utilized to learn the optimal parametrized resource allocation as defined by \eqref{eq_param_problem}. The major challenge to realize this idea is that existing learning methods operate in unconstrained optimization problems. We will overcome this limitation by operating in the dual domain where the problem is unconstrained (Section \ref{sec_dual}). Our main result on learning for constrained optimization is to show that, its lack of convexity notwithstanding, the duality gap of \eqref{eq_param_problem} is small for near-universal parameterizations~(Theorem \ref{theorem_param_duality}). This result justifies operating in the dual domain as it does not entail a significant loss of optimality. A model-free primal-dual method to train \eqref{eq_param_problem} is then introduced in Section \ref{sec_model_free} and neural network parameterizations are described in Section \ref{sec_dnn}.

%\medskip\noindent{\bf Notation remark. } Throughout the paper we use non-boldface variables, e.g. $c$, to reflect scalar values and boldface variables, e.g. $\bby$, to reflect vector values. Accordingly, non-boldface functions, e.g. $C(\bby)$, are scalar-valued and boldface functions, e.g. $\bbf(\bby)$, are vector-valued. The $\nabla$ symbol is then given to denote either a gradient, e.g. $\nabla C(\bby)$, or a Jacobian matrix, e.g. $\nabla \bbf(\bby)$, when applied to scalar and vector-valued functions, respectively. We use the notation $ \mathsf{P}_{\ccalA} (\tby) := \argmin_{\bby \in \ccalA} \| \bby-\tby\|^2$ to signify the projection of vector $\tby$ onto a set $\ccalA$, and additionally employ the simplified notation $[\tby]_+$ to signify the projection onto the positive orthant $\reals_+$.

%%%%%%%%%%%%%%%%%%%%%%%%%%%%%%%%%%%%%%%%%%%%%%%%%%%%%%%%%%%%%%%%%
%%%%%%%%%%%%%%%%%%%%%%%%%
%%%%%%% S E C T I O N %%%%%%%%%%%
%%%%%%%%%%%%%%%%%%%%%%%%%%
%%%%%%%%%%%%%%%%%%%%%%%%%%%%%%%%%%%%%%%%%%%%%%%%%%%%%%%%%%%%%%%%%%%%%
\section{Lagrangian Dual Problem}\label{sec_dual}

%!TEX root = root.tex

Solving the optimization problem in \eqref{eq_param_problem} requires learning both the parameter $\bbtheta$ and the ergodic average variables $\bbx$ over a set of both convex and non-convex constraints. This can be done by formulating and solving the Lagrangian dual problem. To do so, introduce the nonnegative multiplier dual variables $\bblambda \in \reals_+^p$ and $\bbmu \in \reals_+^r$, respectively associated with the constraints $\bbx\leq\Efphih$ and $\bbg(\bbx)\leq\bbzero$. The Lagrangian of \eqref{eq_param_problem} is an average of objective and constraint values weighted by their respective multipliers:
\begin{align}\label{eq_param_lagrangian}
   \Lagphi &:=   g_0(\bbx) 
                + \bbmu^T \bbg(\bbx)
                \\
                {}&+ \bblambda^T \Big( \Efphih - \bbx \Big). \nonumber
\end{align}
With the Lagrangian so defined, we introduce the dual function $D_{\bbphi}(\bblam,\bbmu)$ as the maximum Lagrangian value attained over all $\bbx\in\ccalX$ and $\bbtheta \in \Theta$
\begin{align}\label{eq_param_dual_function}   
   D_{\bbphi}(\bblam,\bbmu) 
         := \max_{\bbtheta \in \Theta, \bbx \in \ccalX}  \Lagphi .
\end{align}
We think of \eqref{eq_param_dual_function} as a penalized version of \eqref{eq_param_problem} in which the constraints are not enforced but their violation is penalized by the Lagrangian terms $\bbmu^T \bbg(\bbx)$ and $\bblambda^T (\Efphih - \bbx)$. This interpretation is important here because the problem in~\eqref{eq_param_dual_function} is unconstrained except for the set restrictions $\bbtheta \in \Theta$ and $\bbx \in \ccalX$. This renders \eqref{eq_param_dual_function} analogous to conventional learning objectives and, as such, a problem that we can solve with conventional learning algorithms.

It is easy to verify and well-known that for any choice of $\bblambda \geq\bbzero$ and $\bbmu \geq \bbzero$ we have $D_{\bbphi}(\bblam,\bbmu)\geq P_{\bbphi}^*$. This motivates definition of the dual problem in which we search for the multipliers that make $D_{\bbphi}(\bblam,\bbmu)$ as small as possible
\begin{align}\label{eq_param_dual0}
   D_{\bbphi}^* &:= \min_{\bblambda,\bbmu \geq \bb0} D_{\bbphi}(\bblam,\bbmu).
\end{align}
The dual optimum $D_{\bbphi}^*$ is the best approximation we can have of $P_{\bbphi}^*$ when using \eqref{eq_param_dual_function} as a proxy for \eqref{eq_param_problem}. It follows that the two concerns that are relevant in utilizing \eqref{eq_param_dual_function} as a proxy for~\eqref{eq_param_problem} are: (i)~evaluating the difference between $D_{\bbphi}^*$ and $P_{\bbphi}^*$ and (ii)~designing a method for finding the optimal multipliers that attains the minimum in \eqref{eq_param_dual0}. We address (i) in Section \ref{sec_analysis} and (ii) in Section \ref{sec_primal_dual}.

%%%%%%%%%%%%%%%%%%%%%%%%%%%%%%%%%%%%%%%%%%%%%%%%%%%%%%%%%%%%%%%%%%%%%%%%%%%%%%%%%%%%%%%%
%%%   S   E   C   T   I   O   N   %%%%%%%%%%%%%%%%%%%%%%%%%%%%%%%%%%%%%%%%%%%%%%%%%%%%%%
%%%%%%%%%%%%%%%%%%%%%%%%%%%%%%%%%%%%%%%%%%%%%%%%%%%%%%%%%%%%%%%%%%%%%%%%%%%%%%%%%%%%%%%%
%

\subsection{Suboptimality of the dual problem}\label{sec_analysis}

%!TEX root = root.tex

The duality gap is the difference $D_{\bbphi}^*-P_{\bbphi}^*$ between the dual and primal optima. For convex optimization problems this gap is null, which implies that one can work with the Lagrangian as in \eqref{eq_param_dual_function} without loss of optimality. The optimization problem in \eqref{eq_param_problem}, however, is not convex as it incorporates the nonconvex constraint in \eqref{eqn_ergo_avg_learning}. We will show here that despite the presence of this nonconvex constraint the duality gap $D_{\bbphi}^*-P_{\bbphi}^*$ is small when using parametrizations that are near universal in the sense of Definition \ref{def_universal}. In proving this result we need to introduce some restrictions to the problem formulation that we state as assumptions next.

%%%%%%%%%%%%%%%%%%%%%%%%%%%%%%%%%%%%%%%%%%%%%%%%%%%%%%%%%%%%%%%%%%%%%%%%%%%%%%%%%%%%%%%%
%%%   A   S   S   U   M   P   T   I   O   N   %%%%%%%%%%%%%%%%%%%%%%%%%%%%%%%%%%%%%%%%%%
%%%%%%%%%%%%%%%%%%%%%%%%%%%%%%%%%%%%%%%%%%%%%%%%%%%%%%%%%%%%%%%%%%%%%%%%%%%%%%%%%%%%%%%%
% 
\begin{assumption}\label{assumption_nonatomic}
The probability distribution $m(\bbh)$ is nonatomic in $\ccalH$. I.e., for any set $\ccalE\subseteq\ccalH$ of nonzero probability there exists a nonzero probability strict subset $\ccalE'\subset\ccalE$ of lower probability, $ 0 < \mbE_\bbh(\ind{\ccalE'}) < \mbE_\bbh(\ind{\ccalE})$. \end{assumption}

%%%%%%%%%%%%%%%%%%%%%%%%%%%%%%%%%%%%%%%%%%%%%%%%%%%%%%%%%%%%%%%%%%%%%%%%%%%%%%%%%%%%%%%%
%%%   A   S   S   U   M   P   T   I   O   N   %%%%%%%%%%%%%%%%%%%%%%%%%%%%%%%%%%%%%%%%%%
%%%%%%%%%%%%%%%%%%%%%%%%%%%%%%%%%%%%%%%%%%%%%%%%%%%%%%%%%%%%%%%%%%%%%%%%%%%%%%%%%%%%%%%%
%
\begin{assumption}\label{assumption_slater}
Slater's condition hold for the unparameterized problem in \eqref{eq_problem} and for the parametrized problem in \eqref{eq_param_problem}. In particular, there exists variables $\bbx_0$ and $\bbp_0(\bbh)$ and a strictly positive scalar constant $s>0$ such that 
\begin{align}\label{eqn_assumption_slater}
  \Efphzero - \bbx_0 \geq  s\bbone.
\end{align} \end{assumption}

%%%%%%%%%%%%%%%%%%%%%%%%%%%%%%%%%%%%%%%%%%%%%%%%%%%%%%%%%%%%%%%%%%%%%%%%%%%%%%%%%%%%%%%%
%%%   A   S   S   U   M   P   T   I   O   N   %%%%%%%%%%%%%%%%%%%%%%%%%%%%%%%%%%%%%%%%%%
%%%%%%%%%%%%%%%%%%%%%%%%%%%%%%%%%%%%%%%%%%%%%%%%%%%%%%%%%%%%%%%%%%%%%%%%%%%%%%%%%%%%%%%%
%
\begin{assumption}\label{assumption_nondecreasing_utilities}
The objective utility function $g_0(\bbx)$ is monotonically non-decreasing in each component. I.e., for any $\bbx\leq\bbx'$ it holds $g_0(\bbx)\leq g_0(\bbx')$.
\end{assumption}

%%%%%%%%%%%%%%%%%%%%%%%%%%%%%%%%%%%%%%%%%%%%%%%%%%%%%%%%%%%%%%%%%%%%%%%%%%%%%%%%%%%%%%%%
%%%   A   S   S   U   M   P   T   I   O   N   %%%%%%%%%%%%%%%%%%%%%%%%%%%%%%%%%%%%%%%%%%
%%%%%%%%%%%%%%%%%%%%%%%%%%%%%%%%%%%%%%%%%%%%%%%%%%%%%%%%%%%%%%%%%%%%%%%%%%%%%%%%%%%%%%%%
%
\begin{assumption}\label{assumption_lipschitz}
The expected performance function $\mbE \left[ \fph \right]$ is expectation-wise Lipschitz on $\bbp(\bbh)$ for all fading realizations $\bbh \in \ccalH$. Specifically, for any pair of resource allocations $\bbp_1(\bbh)\in \ccalP$ and $\bbp_2(\bbh) \in \ccalP$ there is a constant $L$ such that
\begin{align}
  \mbE \| \bbf(\bbp_1(\bbh), \bbh) - \bbf(\bbp_2(\bbh),\bbh) \|_{\infty}
       \leq L \mbE \| \bbp_1(\bbh) - \bbp_2(\bbh)\|_{\infty} .
\end{align} \end{assumption}

%%%%%%%%%%%%%%%%%%%%%%%%%%%%%%%%%%%%%%%%%%%%%%%%%%%%%%%%%%%%%%%%%%%%%%%%%%%%%%%%%%%%%%%%
%%%   M   A   I   N       M   A   T   T   E   R   %%%%%%%%%%%%%%%%%%%%%%%%%%%%%%%%%%%%%%
%%%%%%%%%%%%%%%%%%%%%%%%%%%%%%%%%%%%%%%%%%%%%%%%%%%%%%%%%%%%%%%%%%%%%%%%%%%%%%%%%%%%%%%%
%
Although Assumptions \ref{assumption_nonatomic}-\ref{assumption_lipschitz} restrict the scope of problems \eqref{eq_problem} and \eqref{eq_param_problem}, they still allow consideration of most problems of practical importance. Assumption \ref{assumption_slater} simply states that service demands can be provisioned with some slack. We point that an inequality analogous to \eqref{eqn_assumption_slater} holds for the other constraints in \eqref{eq_problem} and \eqref{eq_param_problem}. However, it is only the slack $s$ that appears in the bounds we will derive. Assumption \ref{assumption_nondecreasing_utilities} is a restriction on the utilities $g_0(\bbx)$, namely that increasing performance values result in increasing utility. Assumption \ref{assumption_lipschitz} is a continuity statement on each of the dimensions of the expectation of the constraint function $\bbf$---we point out this is weaker than general Lipschitz continuity. Referring back to the problems discussed in Examples \ref{example_point_to_point}-\ref{example_fdma}, it is evident that they satisfy the monotonicity assumption in Assumption \ref{assumption_nondecreasing_utilities}. Furthermore, the continuity assumption in Assumption \ref{assumption_lipschitz} is immediatley satisfied by the continuous capacity function in Examples \ref{example_point_to_point} and \ref{example_interference}, and is also satisfied by the binary problem in Example \ref{example_fdma} due to the bounded expectation of the capacity function. 

Assumption \ref{assumption_nonatomic} states that there are no points of strictly positive probability in the distributions $m(\bbh)$. This requires that the fading state $\bbh$ take values in a dense set with a proper probability density -- no distributions with delta functions are allowed. This is the most restrictive assumption in principle if we consider systems with a finite number of fading states. We observe that in reality fading does take on a continuum of values, though the channel estimation algorithms may quantize estimates to a finite number of fading states. We stress, however, that the learning algorithm we develop in the proceeding sections does not depend upon this property, and may be directly applied to channels with discrete states.

The duality gap of the original (unparameterized) problem in \eqref{eq_problem} is known to be null -- see Appendix \ref{app_param_duality} and \cite{ribeiro2012optimal}. Given the validity of Assumptions \ref{assumption_nonatomic} - \ref{assumption_lipschitz} and using a parametrization that is nearly universal in the sense of Definition  \ref{def_universal}, we show that the duality/parametrization gap $|D_{\bbphi}^*-P^*|$ between problems~\eqref{eq_problem} and~\eqref{eq_param_dual0} is small as we formally state next.

%%%%%%%%%%%%%%%%%%%%%%%%%%%%%%%%%%%%%%%%%%%%%%%%%%%%%%%%%%%%%%%%%%%%%%%%%%%%%%%%%%%%%%%%
%%%   T   H   E   O   R   E   M   %%%%%%%%%%%%%%%%%%%%%%%%%%%%%%%%%%%%%%%%%%%%%%%%%%%%%%
%%%%%%%%%%%%%%%%%%%%%%%%%%%%%%%%%%%%%%%%%%%%%%%%%%%%%%%%%%%%%%%%%%%%%%%%%%%%%%%%%%%%%%%%
%
\begin{theorem}\label{theorem_param_duality}
Consider the parameterized resource allocation problem in \eqref{eq_param_problem} and its Lagrangian dual in \eqref{eq_param_dual0} in which the parametrization $\bbphi$ is $\eps$-universal in the sense of Definition \ref{def_universal}. If Assumptions \ref{assumption_nonatomic}--\ref{assumption_lipschitz} hold, then the dual value~$D^*_{\bbphi}$ is bounded by
\begin{equation}\label{eq_theorem_param_duality}
   P^* - \|\bblambda^*\|_1 L \eps 
     \  \leq\  D^*_{\bbphi} 
     \  \leq\  P^*,
\end{equation}
where the multiplier norm $\|\bblambda^*\|_1$ can be bounded as 
\begin{equation}\label{eq_lambda_norm_bound} 
	\norm{\bblambda^*}_1 \leq \frac{P^* - g_0(\bbx_0)}{s}
		< \infty, 
\end{equation}
in which $\bbx_0$ is the strictly feasible point of Assumption \ref{assumption_slater}. \end{theorem}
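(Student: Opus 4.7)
The plan is to prove the two-sided estimate in \eqref{eq_theorem_param_duality} by handling the upper and lower bounds separately, then to derive \eqref{eq_lambda_norm_bound} by the standard Slater calculation. Let $\ccalL(\bbp,\bbx,\bblambda,\bbmu):=g_0(\bbx)+\bbmu^T\bbg(\bbx)+\bblambda^T(\Efph-\bbx)$ and $D(\bblambda,\bbmu):=\sup_{\bbp\in\ccalP,\,\bbx\in\ccalX}\ccalL(\bbp,\bbx,\bblambda,\bbmu)$ be the Lagrangian and dual function of the unparameterized problem~\eqref{eq_problem}; the zero-duality-gap result cited in the paper (which uses Assumption~\ref{assumption_nonatomic}) gives $D^*=P^*$. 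The upper bound $D^*_{\bbphi}\leq P^*$ is then immediate: every parameterized policy $\bbphi(\cdot,\bbtheta)$ with $\bbtheta\in\Theta$ lies in $\ccalP$ by the definition of $\Theta$, so the maximization in \eqref{eq_param_dual_function} is over a subset of the one defining $D(\bblambda,\bbmu)$, yielding $D_{\bbphi}(\bblambda,\bbmu)\leq D(\bblambda,\bbmu)$ pointwise; minimizing in $(\bblambda,\bbmu)\geq\bbzero$ closes this side.

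The substantive step is the lower bound, which I plan to derive from the uniform-in-$(\bblambda,\bbmu)$ estimate
\begin{equation*}
D_{\bbphi}(\bblambda,\bbmu) \ \geq\ D(\bblambda,\bbmu)\ -\ \|\bblambda\|_1 L \eps.
\end{equation*}
To prove it I would (i) pick an $\eta$-maximizer $(\bbp_\eta,\bbx_\eta)\in\ccalP\times\ccalX$ of $\ccalL(\cdot,\cdot,\bblambda,\bbmu)$; (ii) invoke Definition~\ref{def_universal} to choose $\bbtheta^\dagger\in\Theta$ with $\mbE\|\bbp_\eta(\bbh)-\bbphi(\bbh,\bbtheta^\dagger)\|_\infty\leq\eps$; (iii) apply Assumption~\ref{assumption_lipschitz} to conclude $\|\mbE[\bbf(\bbphi(\bbh,\bbtheta^\dagger),\bbh)-\bbf(\bbp_\eta(\bbh),\bbh)]\|_\infty\leq L\eps$; and (iv) use the H\"older-type bound $|\bblambda^T\bbv|\leq\|\bblambda\|_1\|\bbv\|_\infty$ to get $\ccalL_{\bbphi}(\bbtheta^\dagger,\bbx_\eta,\bblambda,\bbmu)\geq\ccalL(\bbp_\eta,\bbx_\eta,\bblambda,\bbmu)-\|\bblambda\|_1 L\eps$. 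Taking suprema and letting $\eta\to 0$ proves the displayed inequality. Evaluating it at a minimizer $(\bblambda^*_{\bbphi},\bbmu^*_{\bbphi})$ of $D_{\bbphi}$ and combining with $D(\bblambda^*_{\bbphi},\bbmu^*_{\bbphi})\geq D^*=P^*$ delivers $D^*_{\bbphi}\geq P^* - \|\bblambda^*_{\bbphi}\|_1 L\eps$, which is the sought lower bound.

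Finally, \eqref{eq_lambda_norm_bound} follows from the standard Slater calculation: evaluating the saddle-point inequality at the strictly feasible $(\bbp_0,\bbx_0)$ from Assumption~\ref{assumption_slater} yields $g_0(\bbx_0)+\bbmu^{*T}\bbg(\bbx_0)+\bblambda^{*T}(\Efphzero-\bbx_0)\leq P^*$; dropping the nonnegative term $\bbmu^{*T}\bbg(\bbx_0)$ and using $\Efphzero-\bbx_0\geq s\bbone$ rearranges to $\|\bblambda^*\|_1\leq(P^*-g_0(\bbx_0))/s$, and the same bound applies to the dual optimum of \eqref{eq_param_problem} because $P^*_{\bbphi}\leq P^*$. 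The main obstacle I anticipate is the lower-bound step: one must ensure that the universality error transfers \emph{linearly} through $\bbf$ (via Lipschitzness) and then through the dual variables (via H\"older), so that the Lagrangian deviation is cleanly $\|\bblambda\|_1 L\eps$ and independent of $\bbx$, $\bbmu$, and the auxiliary $\eta$; without this uniformity one could not safely pass to the infimum in $(\bblambda,\bbmu)$ and substitute the Slater bound on $\|\bblambda^*\|_1$ at the end.
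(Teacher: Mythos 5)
Your core estimate is the same as the paper's: both arguments convert the $\eps$-universality error of Definition~\ref{def_universal} into an $L\eps$ perturbation of the ergodic constraint via Assumption~\ref{assumption_lipschitz} and the convexity of the norm, and then into a $\norm{\bblambda}_1 L\eps$ perturbation of the Lagrangian via H\"older. Your upper bound is in fact cleaner than the paper's: you use only the inclusion $\{\bbphi(\cdot,\bbtheta):\bbtheta\in\Theta\}\subseteq\ccalP$ to get $D_{\bbphi}(\bblambda,\bbmu)\le D(\bblambda,\bbmu)$ pointwise, whereas the paper argues through the primal maximizers ($\bbx_\theta^\star\le\bbx^\star$ componentwise), a step that is delicate for vector-valued $\bbx$. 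Where you genuinely diverge is in how the lower bound is anchored. The paper minimizes its inequality over $(\bblambda,\bbmu)$, recognizes the result as the dual value of the $(L\eps)$-perturbed problem~\eqref{eq_problem_b}, invokes strong duality for that perturbed problem, and applies the sensitivity inequality $P^*_{L\eps}\ge P^*-\norm{\bblambda^*}_1 L\eps$; the multiplier in the final bound is therefore the optimal dual variable of the \emph{unparameterized} dual~\eqref{eq_dual}, and that is the $\bblambda^*$ for which the paper proves~\eqref{eq_lambda_norm_bound}. You instead evaluate your uniform inequality at the minimizer of $D_{\bbphi}$, obtaining $D^*_{\bbphi}\ge P^*-\norm{\bblambda^*_{\bbphi}}_1 L\eps$ with the multiplier of the \emph{parameterized} dual~\eqref{eq_param_dual0}. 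Your route avoids the perturbation machinery entirely, which is a real simplification, but it changes which multiplier appears in the bound.

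That substitution is the one place your argument has a gap. To recover~\eqref{eq_lambda_norm_bound} for $\bblambda^*_{\bbphi}$ you assert that ``the same bound applies because $P^*_{\bbphi}\le P^*$,'' but the Slater calculation requires a strictly feasible point of the \emph{parameterized} problem with quantified slack, and Assumption~\ref{assumption_slater} only quantifies the slack $s$ for a point $\bbp_0\in\ccalP$ that need not be representable as $\bbphi(\cdot,\bbtheta)$. The natural repair---approximate $\bbp_0$ by some $\bbphi(\cdot,\bbtheta_0)$ using Definition~\ref{def_universal} and Assumption~\ref{assumption_lipschitz}---yields slack $s-L\eps$ and hence $\norm{\bblambda^*_{\bbphi}}_1\le \big(P^*-g_0(\bbx_0)\big)/(s-L\eps)$, which is valid only when $L\eps<s$ and is weaker than~\eqref{eq_lambda_norm_bound}. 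So either accept that slightly degraded constant, or switch at the last step to the paper's anchoring: minimize your uniform bound over $(\bblambda,\bbmu)$ to land on the perturbed dual $D^*_{L\eps}=P^*_{L\eps}$ and then apply the sensitivity inequality, which keeps the unparameterized $\bblambda^*$ and makes~\eqref{eq_lambda_norm_bound} exactly the Slater bound you already derived.
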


%%%%%%%%%%%%%%%%%%%%%%%%%%%%%%%%%%%%%%%%%%%%%%%%%%%%%%%%%%%%%%%%%%%%%%%%%%%%%%%%%%%%%%%%
%%%   P   R   O   O   F   %%%%%%%%%%%%%%%%%%%%%%%%%%%%%%%%%%%%%%%%%%%%%%%%%%%%%%%%%%%%%%
%%%%%%%%%%%%%%%%%%%%%%%%%%%%%%%%%%%%%%%%%%%%%%%%%%%%%%%%%%%%%%%%%%%%%%%%%%%%%%%%%%%%%%%%
%
\begin{myproof} See Appendix \ref{app_param_duality}. \end{myproof}

%%%%%%%%%%%%%%%%%%%%%%%%%%%%%%%%%%%%%%%%%%%%%%%%%%%%%%%%%%%%%%%%%%%%%%%%%%%%%%%%%%%%%%%%
%%%   M   A   I   N       M   A   T   T   E   R   %%%%%%%%%%%%%%%%%%%%%%%%%%%%%%%%%%%%%%
%%%%%%%%%%%%%%%%%%%%%%%%%%%%%%%%%%%%%%%%%%%%%%%%%%%%%%%%%%%%%%%%%%%%%%%%%%%%%%%%%%%%%%%%
%
Given any near-universal parameterization that achieves $\epsilon$-accuracy with respect to all resource allocation policies in $\ccalP$, Theorem \ref{theorem_param_duality} establishes an upper and lower bound on the dual value in \eqref{eq_param_dual0} relative to the optimal primal of the original problem in \eqref{eq_problem}. The dual value is not greater than $P^*$ and, more importantly, not worse than a bias on the order of $\eps$. These bounds justify the use of the parametrized dual function in \eqref{eq_param_dual_function} as a means of solving the (unparameterized) wireless resource allocation problem in \eqref{eq_problem}. Theorem \ref{theorem_param_duality} shows that there exist a set of multipliers -- those that attain the optimal dual value $D^*_{\bbphi}$ -- that yield a problem that is within $\ccalO(\eps)$ of optimal.

It is interesting to observe that the duality gap $P^* -  D^*_{\bbphi} \leq \|\bblambda^*\|_1 L \eps$ has a very simple dependance on problem constants. The $\eps$ factor comes from the error of approximating arbitrary resource allocations $\bbp(\bbh)$ with parametrized resource allocations $\bbphi(\bbh,\bbtheta)$. The Lipschitz constant $L$ translates this difference into a corresponding difference between the functions $\fph$ and $\fphih$. The norm of the Lagrange multiplier $\|\bblambda^*\|_1$ captures the sensibility of the optimization problem with respect to perturbations, which in this case comes from the difference between $\fph$ and $\fphih$. This latter statement is clear from the bound in \eqref{eq_lambda_norm_bound}. For problems in which the constraints are easy to satisfy, we can find feasible points close the optimum so that $P^* - g_0(\bbx_0) \approx 0$ and $\bbs$ is not too small. For problems where constraints are difficult to satisfy, a small slack $\bbs$ results in a meaningful variation in $P^* - g_0(\bbx_0)$ and a large value for the ratio $[P^* - g_0(\bbx_0)] /s$. We point out that~\eqref{eq_lambda_norm_bound} is a classical bound in optimization theory that we include here for completeness.

\subsection{Primal-Dual learning}\label{sec_primal_dual}

In order to train the parametrization $\bbphi(\bbh,\bbtheta)$ on the problem \eqref{eq_param_problem} we propose a \emph{primal-dual} optimization method. A primal-dual method performs gradient updates directly on both the primal and dual variables of the Lagrangian function in \eqref{eq_param_lagrangian} to find a local stationary point of the KKT conditions of \eqref{eq_param_problem}. In particular, consider that we successively update both the primal variables $\bbtheta, \bbx$ and dual variables $\bblambda, \bbmu$ over an iteration index $k$. At each index $k$ of the primal-dual method, we update the current primal iterates $\bbtheta_k, \bbx_k$ by adding the corresponding partial gradients of the Lagrangian in \eqref{eq_param_lagrangian}, i.e. $\nabla_{\bbtheta} \ccalL, \nabla_{\bbx} \ccalL$, and projecting to the corresponding feasible set, i.e.,
\begin{align}
\bbtheta_{k+1} &= \mathsf{P}_{\Theta}\left[\bbtheta_k + \gamma_{\bbtheta,k} \nabla_{\bbtheta}
\E \bbf(\bbphi(\bbh,\bbtheta_k),\bbh)\bblambda_k  \right], \label{eq_pd_update1} \\
\bbx_{k+1} &= \mathsf{P}_{\ccalX}\left[\bbx_k  +  \gamma_{\bbx,k} (\nabla g_0(\bbx) + \nabla \bbg(\bbx_{k})\bbmu_k - \bblambda_k) \right] , \label{eq_pd_update2}
\end{align}
where we introduce $\gamma_{\bbtheta,k} ,\gamma_{\bbx,k}>0$ as scalar step sizes. Likewise, we perform a gradient update on current dual iterates $\bblambda_k, \bbmu_k$ in a similar manner---by \emph{subtracting} the partial stochastic gradients  $\nabla_{\bblambda} \ccalL, \nabla_{\bbmu} \ccalL$ and projecting onto the positive orthant to obtain
\begin{align}
\bblambda_{k+1} &= \left[ \bblambda_k - \gamma_{\bblambda,k} \left( \E_{\bbh} \bbf(\bbphi(\bbh,\bbtheta_{k+1}),\bbh) -\bbx_{k+1} \right) \right]_+,\label{eq_pd_update3} \\
\bbmu_{k+1} &= \left[\bbmu_k - \gamma_{\bbmu,k} \bbg(\bbx_{k+1}) \right]_+, \label{eq_pd_update4}
\end{align}
with associated step sizes $\gamma_{\bblambda,k}, \bbgamma_{\bbmu,k} >0$. The gradient primal-dual updates in \eqref{eq_pd_update1}-\eqref{eq_pd_update4} successively move the primal and dual variables towards maximum and minimum points of the Lagrangian function, respectively.

The above gradient-based updates provide a natural manner by which to search for the optimal point of the dual function~$D_{\bbphi}$. However, direct evaluation of these updates requires both the knowledge of the functions $g_0, g, \bbf$, as well as the wireless channel distribution $m(\bbh)$. We cannot always assume this knowledge is available in practice. Indeed, existing models for, e.g., capacity functions, do not always capture the true physical performance in practice. The primal-dual learning method presented is thus considered here only as a baseline method upon which we can develop a completely model-free algorithm. The details of model-free learning are discussed further in the following section.

%\red{
%The primal-dual updates in \eqref{eq_pd_update1}-\eqref{eq_pd_update4} cannot be implemented directly in most practical systems, due to the requirement of model knowledge. In many settings, the models for, e.g. $g_0$ and $\bbf$ do not have known closed form expressions. Recall, for example, the optimal power allocation for capacity maximization in Example \ref{example_interference}. While the capacity of the wireless channel $\bbf$ can be modeled with the expression given in the first inequality constraint of \eqref{eq_interference_problem1}, the capacity in a real system will likely differ in a unknown manner. Likewise, without knowledge of the distribution $m(\bbh)$ we cannot evaluate $\E_{\bbh} \bbf$ directly. This makes the the direct computation of gradients and Jacobian used in \eqref{eq_pd_update1}-\eqref{eq_pd_update4} infeasible. In Section \ref{sec_model_free}, we discuss approaches towards estimating such gradients using online sampling of their respective functions. We first proceed, however, to discuss the details of a particular universal parameterization we employ known as deep neural networks.} \blue{Rewrite.}

%%%%%%%%%%%%%%%%%%%%%%%%%%%%%%%%%%%%%%%%%%%%%%%%%%%%%%%%%%%%%%%%%
%%%%%%%%%%%%%%%%%%%%%%%%%
%%%%%%% S E C T I O N %%%%%%%%%%%
%%%%%%%%%%%%%%%%%%%%%%%%%%
%%%%%%%%%%%%%%%%%%%%%%%%%%%%%%%%%%%%%%%%%%%%%%%%%%%%%%%%%%%%%%%%%%%%%
\section{Model-Free Learning}\label{sec_model_free}
 
%!TEX root = root.tex

%\blue{The functions $g_0$ and $\bbg$ are under our control. We know them well. It's $\bbf$ that has to be estimated only. This will simplify your exposition and equations. There are only two equations you need to write. The other two you can write as before.}\green{Mark: This simplifies the presentation but not dramatically, so I don't necessarily agree that its not worth generalizing to a complete model-free method. I can think of cases, for instance in control, in which there are cost functions---e.g. drift---that are not known.}

In this section, we consider that often in practice, we do not have access to explicit knowledge of the functions $g_0$, $\bbg$, and~$\bbf$, along with the distribution $m(\bbh)$, but rather observe noisy estimates of their values at given operating points. While this renders the direct implementation of the standard primal-dual updates in \eqref{eq_pd_update1}-\eqref{eq_pd_update4} impossible, given their reliance on gradients that cannot be evaluated, we can use these updates to develop a \emph{model-free} approximation. Consider that given any set of iterates and channel realization $\{ \tbtheta, \tbx, \tbh\}$, we can observe stochastic function values $\hat{g}_0(\tbx)$, $\hbg(\tbx)$, and $\hbf(\tbh,\bbphi(\tbh,\tbtheta))$. For example, we may pass test signals through the channel at a given power or bandwidth to measure its capacity or packet error rate. These observations are, generally, unbiased estimates of the true function values. 

We can then replace the updates in \eqref{eq_pd_update1}-\eqref{eq_pd_update4} with so-called zeroth-ordered updates, in which we construct estimates of the function gradients using observed function values. Zeroth-ordered gradient estimation can be done naturally with the method of finite differences, in which unbiased gradient estimators at a given point are constructed through random perturbations. Consider that we draw random perturbations $\hbx_1, \hbx_2 \in \reals^u$  and $\hbtheta \in \reals^q$ from a standard Gaussian distribution and a random channel state $\hbh$ from $m(\bbh)$.  Finite-difference gradients estimates $\widehat{\nabla} g_0$, $\widehat{\nabla} \bbg$, and $\widehat{\nabla_{\bbtheta}} \E \bbf$ can be constructed using function observations at given points $\{\bbx_0, \bbtheta_0\}$ and the sampled perturbations as
\begin{align}
\widehat{\nabla} g_0&(\bbx_0) := \frac{ \hat{g}_0(\bbx_0 + \alpha_{1} \hbx_1) - \hat{g}_0(\bbx_0)}{\alpha_{1}} \hbx_1, \label{eq_fd_1} \\
\widehat{\nabla} \bbg&(\bbx_0) := \frac{ \hbg(\bbx_0 + \alpha_{3} \hbx_2) - \hbg(\bbx_0)}{\alpha_{3}} \hbx_2^T, \label{eq_fd_2} \\
\widehat{ \nabla_{\bbtheta}} \E[\bbf(\bbphi(\bbh,&\bbtheta_0),\bbh)] := \label{eq_fd_3} \\
 &\frac{ \hbf(\bbphi(\hbh,\bbtheta_0 + \alpha_{2} \hbtheta),\hbh) - \hbf(\bbphi(\hbh,\bbtheta_0),\hbh)}{\alpha_{2}} \hbtheta^T,  \nonumber 
\end{align}
where we define scalar step sizes $\alpha_{1}, \alpha_{2},\alpha_{3} >0$. The expressions in \eqref{eq_fd_1}-\eqref{eq_fd_3} provide estimates of the gradients that can be computed using only two function evaluations. Indeed, the finite difference estimators can be shown to be unbiased, meaning that that they coincide with the true gradients in expectation---see, e.g., \cite{nesterov2011random}. Note also in \eqref{eq_fd_3} that, by sampling both the function $\bbf$ and a channel state $\hbh$, we directly estimate the expectation $\E_{\bbh} \bbf$. We point out that these estimates can be further improved by using batches of $B$ samples, $\{  \hbx_1^{(b)}, \hbx_2^{(b)}, \hbtheta^{(b)}, \hbh^{(b)} \}_{b=1}^B$, and averaging over the batch. We focus on the simple stochastic estimates in \eqref{eq_fd_1}-\eqref{eq_fd_3}, however, for clarity of presentation.

Note that, while using the finite difference method to estimate the gradients of the deterministic function $g_0(\bbx)$ and $\bbg(\bbx)$ is relatively simple, estimating the stochastic policy function $\E_{\bbh} \bbf(\bbphi(\bbh,\bbtheta),\bbh)$ is often a computational burden in practice when the parameter dimension $q$ is very large---indeed, this is often the case in, e.g., deep neural network models. An additional complication arises in that the function must be observed multiple times for the same sample channel state $\hbh$ to obtain the perturbed value. This might be impossible to do in practice if the channel state changes rapidly. There indeed exists, however, an alternative model free approach for estimating the gradient of a policy function, which we discuss in the next subsection.

%%%%%%%%%%%%%%%%%%%%%%%%%%%%%%%%%%%%%%%%%%%%%%%%%%%%%%%%%%%%%%%%%
%%%%%%%%%%%%%%%%%%%%%%%%%
%%%%%%% S U B S E C T I O N %%%%%%%%%%%
%%%%%%%%%%%%%%%%%%%%%%%%%%
%%%%%%%%%%%%%%%%%%%%%%%%%%%%%%%%%%%%%%%%%%%%%%%%%%%%%%%%%%%%%%%%%%%%%
\subsection{Policy gradient estimation}\label{sec_policy_grad}

The ubiquity of computing the gradients of policy functions such as $\nabla_{\bbtheta} \E\bbf(\bbphi(\bbh,\bbtheta),\bbh)$ in machine learning problems has motivated the development of a more practical estimation method. The so-called \emph{policy gradient} method exploits a likelihood ratio property found in such functions to allow for an alternative zeroth ordered gradient estimate. To derive the details of the policy gradient method, consider that a deterministic policy $\bbphi(\bbh,\bbtheta)$ can be reinterpreted as a \emph{stochastic} policy drawn from a distribution with density function $\pi(\bbp)$ defined with a delta function, i.e., $\pi_{\bbh,\bbtheta}(\bbp) = \delta(\bbp - \bbphi(\bbh,\bbtheta))$. It can be shown that the Jacobian of the policy constraint function $ \E_{\bbh,\bbphi}[\bbf(\bbphi(\bbh,\bbtheta),\bbh)]$ with respect to $\bbtheta$ can be rewritten using this density function as 
\begin{equation}\label{eq_policy_gradient}
\nabla_{\bbtheta} \mathbb{E}_{\bbh} \bbf( \bbphi(\bbh,\bbtheta),\bbh) =  \mathbb{E}_{\bbh, \bbp} [\bbf( \bbp,\bbh) \nabla_{\bbtheta} \log \pi_{\bbh,\bbtheta}(\bbp)^T],
\end{equation}
where $\bbp$ is a random variable drawn from distribution $\pi_{\bbh,\bbtheta}(\bbp)$---see, e.g., \cite{sutton2000policy}. Observe in \eqref{eq_policy_gradient} that the computation of the Jacobian reduces to a function evaluation multiplied by the gradient of the policy distribution $\nabla_{\bbtheta} \log \pi_{\bbh,\bbtheta}(\bbp)$. Indeed, in the deterministic case where the distribution is a delta function, the gradient cannot be evaluated without knowledge of $m(\bbh)$ and $\bbf$. However, we may approximate the delta function with a known density function centered around $\bbphi(\bbh,\bbtheta)$, e.g., Gaussian distribution. If an analytic form for $\pi_{\bbh,\bbtheta}(\bbp)$ is known, we can estimate $\nabla_{\bbtheta} \mathbb{E}_{\bbh} \bbf(\bbphi(\bbh,\bbtheta),\bbh)$ by instead directly estimating the left-hand side of \eqref{eq_policy_gradient}. In the context of reinforcement learning, this is called the REINFORCE method \cite{sutton2000policy}. By using the previous function observations, we can obtain the following policy gradient estimate,
%when the policy distribution $\bbpi_{\bbh,\bbtheta}$ follows a known, analytic form. This approach is detailed in \cite{sutton2000policy} and is known as REINFORCE style policy gradients due to its popular usage in reinforcement learning. The key idea is that by using stochastic policies, we may be able to compute the expected Jacobian using only function value evaluations and the Jacobian of the distribution $\bbpi_{\bbh,\bbtheta}$. The Jacobian of the policy constraint function $ \E_{\bbh,\bbtheta}\bbf(\bbh,\bbp_{\bbtheta})$ with respect to $\bbtheta$ can then be written as
%
%\begin{equation}\label{eq_policy_gradient}
%\nabla_{\bbtheta} \mathbb{E}_{\bbh, \bbtheta} \bbf(\bbh, \bbp_{\bbtheta}) =  \mathbb{E}_{\bbh, \bbp} [\bbf(\bbh, \bbp_{\bbtheta}) \nabla_{\bbtheta} \log \bbpi_{\bbh,\bbtheta}(\bbp_{\bbtheta})].
%\end{equation}
%
%Observe in \eqref{eq_policy_gradient} that the computation of the Jacobian reduces to a function evaluation multiplied by the Jacobian $\nabla_{\bbtheta} \log \bbpi_{\bbphi(\bbh,\bbtheta)}(\bbp_{\bbtheta} )$. Indeed, such an equivalence is only useful in the model-free setting when the the density function $\bbpi_{\bbh,\bbtheta}$---and likewise its Jacobian---are themselves known. In this case, however, we can now estimate the Jacobian using the function observations as
%
\begin{equation}\label{eq_policy_gradient_s}
\widehat{\nabla_{\bbtheta}} \mathbb{E}_{\bbh} \bbf(\bbphi(\bbh,\bbtheta),\bbh) =  \hbf( \hbp_{\bbtheta},\hbh) \nabla_{\bbtheta} \log \pi_{\hbh,\bbtheta}(\hbp_{\bbtheta} )^T,
\end{equation}
where $\hbp_{\bbtheta}$ is a sample drawn from the distribution $\pi_{\bbh,\bbtheta}(\bbp)$.

The policy gradient estimator in \eqref{eq_policy_gradient_s} can be taken as an alternative to the finite difference approach in \eqref{eq_fd_3} for estimating the gradient of the policy constraint function, provided the gradient of the density function $\bbpi$ can itself be evaluated. Observe in the above expression that the policy gradient approach replaces a sampling of the parameter $\bbtheta \in \reals^q$ with a sampling of a resource allocation $\bbp \in \reals^m$. This is indeed preferable for many sophisticated learning models in which $q \gg m$. We stress that while policy gradient methods are preferable in terms of sampling complexity, they come at the cost of placing an additional approximation through the use of a stochastic policy analytical density functions $\bbpi$.

\subsection{Model-free primal-dual method}

 Using the gradient estimates in \eqref{eq_fd_1}-\eqref{eq_fd_3}---or \eqref{eq_policy_gradient_s}---we can derive a model-free, or zeroth-ordered, stochastic updates to replace those in \eqref{eq_pd_update1}-\eqref{eq_pd_update4}. By replacing all function evaluations with the function observations and all gradient evaluations with the finite difference estimates, we can perform the following stochastic updates
\begin{align}
\bbtheta_{k+1} &= \mathsf{P}_{\Theta}\left[\bbtheta_k + \gamma_{\bbtheta,k} \widehat{\nabla_{\bbtheta}}\mathbb{E}_{\bbh} \bbf(\bbphi(\bbh,\bbtheta_{k}),\bbh) \bblambda_k  \right], \label{eq_spd_update1} \\
\bbx_{k+1} &= \mathsf{P}_{\ccalX}\left[\bbx_k  +  \gamma_{\bbx,k} (\widehat{\nabla g_0}(\bbx) + \widehat{\nabla \bbg}(\bbx_{k})\bbmu_k - \bblambda_k) \right] , \label{eq_spd_update2} \\
\bblambda_{k+1} &= \left[ \bblambda_k - \gamma_{\bblambda,k} \left( \hbf(\bbphi(\hbh_k,\bbtheta_{k+1}),\hbh_k) -\bbx_{k+1} \right) \right]_+\label{eq_spd_update3} \\
\bbmu_{k+1} &= \left[\bbmu_k - \gamma_{\bbmu,k} \hbg(\bbx_{k+1}) \right]_+. \label{eq_spd_update4}
\end{align}

The expressions in \eqref{eq_spd_update1}-\eqref{eq_spd_update4} provides means of updating both the primal and dual variables in a primal-dual manner without requiring any explicit knowledge of the functions or channel distribution through observing function realizations at the current iterates. We may say this method is \emph{model-free} because all gradients used in the updates are constructed entirely from \emph{measurements}, rather than analytic computation done via model knowledge. The complete model-free primal-dual learning method can be summarized in Algorithm \ref{alg:learning}. The method is initialized in Step 1 through the selection of parameterization model $\bbphi(\bbh,\bbtheta)$ and form of the stochastic policy distribution $\pi_{\bbh,\bbtheta}$ and in Step 2 through the initialization of the primal and dual variables. For every step $k$, the algorithm begins in Step 4 by drawing random samples (or batches) of the primal and dual variables. In Step 5, the model functions are sampled at both the current primal and dual iterates and at the sampled points. These function observations are then used in Step 6 to form gradient estimates via finite difference (or policy gradient). Finally, in Step 7 the model-free gradient estimates are used to update both the primal and dual iterates.

%%%%%%%%%%%%%%%%%%%%%%%%%%%%%%%%%%%%%%%%%%%%%%%%%%%%%%%%%%%%%%%%
%%%%   A   L   G   O   R   I   T   H   M   %%%%%%%%%%%%%%%%%%%%%
%%%%%%%%%%%%%%%%%%%%%%%%%%%%%%%%%%%%%%%%%%%%%%%%%%%%%%%%%%%%%%%%
{\linespread{1.3}
\begin{algorithm}[t] \begin{algorithmic}[1]
\STATE \textbf{Parameters:} Policy model $\bbphi(\bbh,\bbtheta)$ and distribution form $\pi_{\bbh,\bbtheta}$ 
\STATE \textbf{Input:} Initial states $\bbtheta_0, \bbx_0, \bblambda_0 \bbmu_0$
\FOR [main loop]{$k = 0,1,2,\hdots$}
      \STATE Draw samples $\{  \hbx_1, \hbx_2, \hbtheta, \hbh_k \}$, or in batches of size $B$
      \STATE Obtain random observation of function values $\hat{g}_0, \hbf$ $\hbg$ at current and sampled iterates
      \STATE Compute gradient estimates $\widehat{\nabla} g_0(\bbx)$,  $\widehat{\nabla} \bbg(\bbx)$, $\widehat{ \nabla_{\bbtheta}} \E_{\bbh,\bbphi}\bbf(\bbphi(\bbh,\bbtheta),\bbh)$,  [cf. \eqref{eq_fd_1}-\eqref{eq_fd_3} or \eqref{eq_policy_gradient_s}]
      \STATE Update primal and dual variables [cf. \eqref{eq_spd_update1}-\eqref{eq_spd_update4}] \ 
         \begin{align}
	\bbtheta_{k+1} &= \mathsf{P}_{\Theta}\left[\bbtheta_k + \gamma_{\bbtheta,k} \widehat{\nabla_{\bbtheta}} \mathbb{E}_{\bbh}\bbf(\bbphi(\hbh_k,\bbtheta_{k}),\hbh_k) 	\bblambda_k  \right], \nonumber \\
	\bbx_{k+1} &= \mathsf{P}_{\ccalX}\left[\bbx_k  +  \gamma_{\bbx,k} (\widehat{\nabla g_0}(\bbx) + \widehat{\nabla \bbg}(\bbx_{k})\bbmu_k - \bblambda_k) 		\right] , \nonumber \\
	\bblambda_{k+1} &= \left[ \bblambda_k - \gamma_{\bblambda,k} \left( \hbf(\bbphi(\hbh_k,\bbtheta_{k+1}),\hbh_k) -\bbx_{k+1} \right) \right]_+	\nonumber \\
\bbmu_{k+1} &= \left[\bbmu_k - \gamma_{\bbmu,k} \hbg(\bbx_{k+1}) \right]_+. \nonumber
\end{align}
\ENDFOR

\end{algorithmic}
\caption{Model-Free Primal-Dual Learning}\label{alg:learning} \end{algorithm}}

We briefly comment on the known convergence properties of the model-free learning method in \eqref{eq_spd_update1}-\eqref{eq_spd_update4}. Due to the non-convexity of the Lagrangian defined in \eqref{eq_param_lagrangian}, the stochastic primal-dual descent method will converge only to a local optima and is not guaranteed to converge to a point that achieves $D_{\bbtheta}^*$. These are indeed the same convergence properties of general \emph{unconstrained} non-convex learning problems as well. We instead demonstrate through numerical simulations the performance of the proposed learning method in practical wireless resource allocation problems in the proceeding section. 

\begin{remark}\label{remark_convergence}\normalfont
The algorithm presented in Algorithm \ref{alg:learning} is generic in nature and can be supplemented with more sophisticated learning techniques that can improve the learning process. Some examples include the use of entropy regularization to improve policy optimization in non-convex problems \cite{haarnoja2017reinforcement}. Policy optimization can also be improved using actor-critic methods \cite{sutton1998reinforcement}, while the use of a model function estimate to obtain ``supervised'' training signals can be used to initialize the parameterization vector $\bbtheta$. The use of such techniques in optimal wireless design are not explored in detail here and left as the study of future work. 
\end{remark}

%%%%%%%%%%%%%%%%%%%%%%%%%%%%%%%%%%%%%%%%%%%%%%%%%%%%%%%%%%%%%%%%%
%%%%%%%%%%%%%%%%%%%%%%%%%
%%%%%%% S E C T I O N %%%%%%%%%%%
%%%%%%%%%%%%%%%%%%%%%%%%%%
%%%%%%%%%%%%%%%%%%%%%%%%%%%%%%%%%%%%%%%%%%%%%%%%%%%%%%%%%%%%%%%%%%%%%
\section{Deep Neural Networks}\label{sec_dnn}

We have so far discussed a theoretical and algorithm means of learning in wireless systems by employing any near universal parametrization as defined in Definition \ref{def_universal}. In this section, we restrict our attention to the increasingly popular set of parameterizations known as \emph{deep neural networks} (DNNs), which are often observed in practice to exhibit strong performance in function approximation. In particular, we discuss the details of the DNN parametrization model and both the theoretical and practical implications within our constrained learning framework.

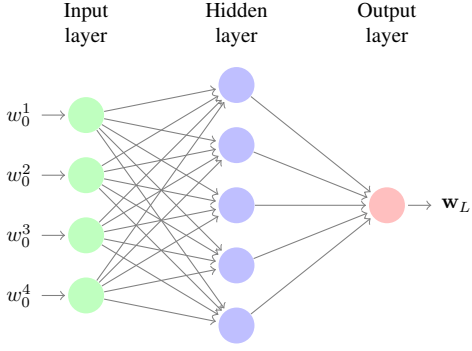
\begin{figure}
\centering
\def\layersep{2.5cm}

\begin{tikzpicture}[shorten >=1pt,->,draw=black!50, node distance=\layersep, scale=.8, transform shape]
    \tikzstyle{every pin edge}=[<-,shorten <=1pt]
    \tikzstyle{neuron}=[circle,fill=black!25,minimum size=17pt,inner sep=0pt]
    \tikzstyle{input neuron}=[neuron, fill=green!25];
    \tikzstyle{output neuron}=[neuron, fill=red!25];
    \tikzstyle{hidden neuron}=[neuron, fill=blue!25];
    \tikzstyle{annot} = [text width=4em, text centered]

    % Draw the input layer nodes
    \foreach \name / \y in {1,...,4}
    % This is the same as writing \foreach \name / \y in {1/1,2/2,3/3,4/4}
        \node[input neuron, pin=left: $w_0^{\y}$] (I-\name) at (0,-\y) {};

    % Draw the hidden layer nodes
    \foreach \name / \y in {1,...,5}
        \path[yshift=0.5cm]
            node[hidden neuron] (H-\name) at (\layersep,-\y cm) {};

    % Draw the output layer node
    \node[output neuron,pin={[pin edge={->}]right:$\bbw_L$}, right of=H-3] (O) {};

    % Connect every node in the input layer with every node in the
    % hidden layer.
    \foreach \source in {1,...,4}
        \foreach \dest in {1,...,5}
            \path (I-\source) edge (H-\dest);

    % Connect every node in the hidden layer with the output layer
    \foreach \source in {1,...,5}
        \path (H-\source) edge (O);

    % Annotate the layers
    \node[annot,above of=H-1, node distance=1cm] (hl) {Hidden layer};
    \node[annot,left of=hl] {Input layer};
    \node[annot,right of=hl] {Output layer};
\end{tikzpicture}

% End of code
\caption{Typical architecture of fully-connected deep neural network.}
\label{fig_nn}
\end{figure}

The exact form of a particular DNN is described by what is commonly referred to as its \emph{architecture}. The architecture consists of a prescribed number of layers, each of which consisting of a linear operation followed by a point-wise nonlinearity---also known as an activation function.  In particular, consider a DNN with $L$ layers, labelled $l=1,\hdots,L$ and each with a corresponding dimension $q_l$. The layer $l$ is  defined by the linear operation $\bbW_l \in \reals^{q_{l-1} \times q_l}$ followed by a non-linear activation function $\bm{\sigma}_{l}: \reals^{q_l} \rightarrow \reals^{q_l}$. If layer $l$ receives as an input from the $l-1$ layer $\bbw_{l-1} \in \reals^{q_{l-1}}$, the resulting output $\bbw_{l} \in \reals^{q_l}$ is then computed as $\bbw_l := \bm{\sigma}_{l}(\bbW_l \bbw_{l-1})$.  The final output of the DNN, $\bbw_L$, is then related to the input $\bbw_0$ by propagating through each later of the DNN as $\bbw_L =  \bm{\sigma}_{L}(\bbW_{L} (\bm{\sigma}_{L-1}(\bbW_{L-1}(\hdots(\bm{\sigma}_{1}(\bbW_{1}\bbw_0))))))$. 

An illustration of a fully-connected example DNN architecture is given in Figure \ref{fig_nn}. In this example, the inputs $\bbw$ are passed through a single hidden layer, following which is an output layer. The grey lines between layers reflect the linear transformation $\bbW_l$, while each node contains an additional element-wise activation function $\bm{\sigma}_l$. This general  DNN structure has been observed to have remarkable generalization and approximation properties in a variety of functional parameterization problems.

The goal in learning DNNs in general then reduces to learning the linear weight functions $\bbW_1, \hdots, \bbW_L$. Common choices of activation functions $\bm{\sigma}_{l}$ include a sigmoid function, a rectifier function (commonly referred to as ReLu), as well as a smooth approximation to the rectifier known as softplus. For the parameterized resource allocation problem in \eqref{eq_param_problem}, the policy $\bbphi(\bbh,\bbtheta)$ can be defined by an $L$-layer DNN as
\begin{equation}\label{eq_policy_dnn}
\bbphi(\bbh,\bbtheta) := \bm{\sigma}_{L}(\bbW_{L} (\bm{\sigma}_{L-1}(\bbW_{L-1}(\hdots(\bm{\sigma}_{1}(\bbW_{1}\bbh)))))),
\end{equation}
where $\bbtheta \in \reals^q$ contains the entries of $\{ \bbW_l \}_{l=1}^L$ with $q = \sum_{l=1}^{L-1} q_l q_{l+1} $. Note that $q_1 = n$ by construction. 

To contextualize the primal-dual algorithm in \eqref{eq_pd_update1}-\eqref{eq_pd_update4} with respect to traditional neural network training, observe that the update in \eqref{eq_pd_update1} requires computation of the gradient $\nabla_{\bbtheta} \E_{\bbh} \bbf(\bbh,\bbphi(\bbh,\bbtheta))$. Using the chain rule, this can be expanded as
\begin{align}\label{jacobian}
\nabla_{\bbtheta} \E_{\bbh} \bbf(&\bbphi(\bbh,\bbtheta_{k}),\bbh) =  \\ 
&\qquad \nabla_{\bbphi}\E_{\bbh} \bbf(\bbphi(\bbh,\bbtheta_{k}),\bbh) \nabla_{\bbtheta} \bbphi (\bbh, \bbtheta_k). \nonumber
\end{align}
Thus, the computation of the full gradient requires evaluating the gradient of the policy function $\bbf$ as well as the gradient of the DNN model $\bbphi$. For the DNN structure in \eqref{eq_policy_dnn}, the evaluation of $\nabla_{\bbtheta} \bbphi$ may itself also require a chain rule expansion to compute partial derivatives at each layer of the network. This process of performing gradient descent to find the optimal weights in the DNN is commonly referred to as backpropogation. 

We further take note how our learning approach differs from a more traditional, \emph{supervised} training of DNNs. As in \eqref{jacobian}, the backpropogation is performed with respect to the given policy constraint function $\bbf$, rather than with respect to a Euclidean loss function over a set of given training data. Furthermore, due to the constraints, the backpropogation step in \eqref{eq_pd_update1} is performed in sequence with the more standard primal and dual variable updates in \eqref{eq_pd_update2}-\eqref{eq_pd_update4}. In this way, the DNN is trained indirectly \emph{within} the broader optimization algorithm used to solve \eqref{eq_param_problem}. This is in contrast with other approaches of training DNNs in constrained wireless resource allocation problems---see, e.g. \cite{sun2017learning,lei2017deep,lee2018deep}---which train a DNN to approximate the complete constrained maximization function in \eqref{eq_problem} directly. Doing so requires the ability to solve \eqref{eq_problem} either exactly or approximately enough times to acquire a labeled training set. The primal-dual learning approach taken here is preferable in that it does not require the use of training data. The dual problem can be seen as a simplified reinforcement learning problem---one in which the actions do not affect the next state.

For DNNs to be valid parametrization with respect to the result in Theorem \ref{theorem_param_duality}, we must first verify that they satisfy the near-universality property in Definition \ref{def_universal}. Indeed, deep neural networks are popular parameterizations for arbitrary functions precisely due to the richness inherent in \eqref{eq_policy_dnn}, which in general grows richer with number of layers $L$ and associated layer sizes $q_l$. This richness property of DNNs has been the subject of mathematical study and formally referred to as a complete universal function approximation \cite{cybenko1989approximation, hornik1991approximation}. In words, this property implies that a large class of functions~$\bbp(\bbh)$ can be approximated with \emph{arbitrarily} small accuracy $\eps$ using a DNN parameterization of the form in \eqref{eq_policy_dnn} with only a single layer of arbitrarily large size. With this property in mind,
%
%we verify that DNNs can in fact satisfy Definition \ref{def_universal} with arbitrarily small $\eps$ in the following proposition.
%
%\begin{proposition}\label{prop_approx}
%Let~$\mathcal{P}$ be the set of continuous policies, i.e., $\mathcal{P} = \mathcal{C}(\mathcal{H})$. Then, for large enough number of hidden nodes~$q$, the DNN parametrization $\bbphi(\bbh,\bbtheta)$ in \eqref{eq_policy_dnn} is fully universal, i.e., it satisfies the near-universality property in Definition \ref{def_universal} for all~$\eps > 0$.
%\end{proposition}
%%
%\begin{myproof}
%See~\cite[Theorem 2]{hornik1991approximation}.% Appendix \ref{sec_dnn_proof}.
%\end{myproof}
%%
%The result in Proposition \ref{prop_approx} is pivotal because it states that a DNN can achieve arbitrary approximation accuracy $\eps$ to any continuous allocation given sufficiently large layers. Thus, the bounded suboptimality result in Theorem \ref{theorem_param_duality} holds for the DNN parametrization. Furthermore, the fact that it holds for arbitrarily small~$\eps$ motivates the extension of this result to the case of arbitrarily small suboptimality. With this property in mind,
we can present the following theorem that extends the result in Theorem \ref{theorem_param_duality} in the special case of DNNs.
\begin{theorem}\label{cor_param_duality}
Consider the DNN parametrization $\bbphi(\bbh,\bbtheta)$ in \eqref{eq_policy_dnn} with non-constant, continuous activation functions~$\bm{\sigma}_l$ for~$l = 1,\dots,L$. Define the vector of layer lengths $\bbq = [q_1; q_2; \hdots; q_L]$ and a DNN defined in \eqref{eq_policy_dnn} with lengths $\bbq$ as $\bbphi_{\bbq}(\bbh,\bbtheta)$. Now consider the set of possible $L$-layer DNN parameterization functions $\Phi := \{\bbphi_{\bbq}(\bbh,\bbtheta) \mid \bbq \in \mathbb{N}^L\}$.  If Assumptions \ref{assumption_nonatomic}--\ref{assumption_lipschitz} hold, then the optimal dual value of the parameterized problem satisfies
\begin{equation}\label{eq_cor_param_duality}
\inf_{\bbphi \in \Phi} \bbD^*_{\bbphi} = P^*.
\end{equation}
\end{theorem}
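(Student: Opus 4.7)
\begin{myproof}[Proof plan for Theorem \ref{cor_param_duality}]
The plan is to sandwich $\inf_{\bbphi\in\Phi} D^*_{\bbphi}$ between $P^*$ from above and $P^*$ from below. The upper bound is essentially free: for every fixed $\bbphi \in \Phi$, the parameterization restricts the set of admissible resource allocation functions, so the parameterized primal $P^*_{\bbphi}$ satisfies $P^*_{\bbphi} \leq P^*$, and since the unparameterized problem has zero duality gap (the first half of the conclusion of Theorem \ref{theorem_param_duality}), we obtain $D^*_{\bbphi} \leq P^*$ for every $\bbphi \in \Phi$. Taking the infimum over $\bbphi \in \Phi$ yields $\inf_{\bbphi \in \Phi} D^*_{\bbphi} \leq P^*$.

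For the matching lower bound, I would invoke the classical universal approximation theorem for feedforward networks with non-constant, continuous activation functions \cite{hornik1991approximation, cybenko1989approximation}: for any $\eps > 0$, any continuous function on a compact subset of $\reals^n$ can be approximated in the uniform norm to within $\eps$ by a network in $\Phi$, provided the layer widths $\bbq$ are chosen large enough. Since the admissible resource allocations $\bbp \in \ccalP$ are bounded by assumption, this uniform bound translates into the expectation-of-sup-norm bound required by Definition \ref{def_universal}, so for each $\eps > 0$ the set $\Phi$ contains an $\eps$-universal parameterization $\bbphi_{\bbq_\eps}$. Applying the lower bound of Theorem \ref{theorem_param_duality} to that specific member of $\Phi$ gives
\begin{equation*}
   D^*_{\bbphi_{\bbq_\eps}} \ \geq\ P^* - \|\bblambda^*\|_1 L \eps,
\end{equation*}
where $\|\bblambda^*\|_1$ refers to the optimal dual multiplier of the \emph{unparameterized} problem and is finite by \eqref{eq_lambda_norm_bound} (hence independent of the parameterization being used). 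Taking an infimum over $\bbphi \in \Phi$ on the left and then letting $\eps \to 0$ on the right yields $\inf_{\bbphi \in \Phi} D^*_{\bbphi} \geq P^*$, which combined with the first paragraph gives the claimed equality.

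The main obstacle I anticipate is the gap between the classical statement of universal approximation, which is uniform approximation on compact sets in $\reals^n$, and the expectation form $\mbE\|\bbp(\bbh)-\bbphi(\bbh,\bbtheta)\|_\infty\leq\eps$ appearing in Definition \ref{def_universal}. If $\ccalH$ is compact, uniform approximation immediately implies the expectation bound. If $\ccalH$ is unbounded, which is typical for fading distributions, I would choose a compact truncation $\ccalH_R := \ccalH \cap \{\|\bbh\|\leq R\}$ of probability at least $1-\eps/(2M)$ (where $M$ bounds $\bbp \in \ccalP$), approximate $\bbp$ uniformly to within $\eps/2$ on $\ccalH_R$ by a DNN, and bound the contribution of $\ccalH \setminus \ccalH_R$ using the uniform boundedness of both $\bbp$ and the network output. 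A minor accompanying subtlety is that Definition \ref{def_universal} quantifies over every $\bbp \in \ccalP$ with a single $\bbtheta$, whereas for the theorem we only need approximation of a specific near-optimal $\bbp$, so the universal approximation argument has to be instantiated only at that particular target function.
\end{myproof}
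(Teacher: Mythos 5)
Your overall strategy is the same as the paper's: sandwich $\inf_{\bbphi\in\Phi}D^*_{\bbphi}$ using the two-sided bound of Theorem \ref{theorem_param_duality}, reduce everything to showing that the class $\Phi$ is $\eps$-universal in the sense of Definition \ref{def_universal} for every $\eps>0$, and let $\eps\to 0$ using the finiteness of $\|\bblambda^*\|_1$ from \eqref{eq_lambda_norm_bound}. One caution on your upper bound: the chain ``$P^*_{\bbphi}\leq P^*$ and the unparameterized problem has zero duality gap, hence $D^*_{\bbphi}\leq P^*$'' is not valid as written, since weak duality gives $D^*_{\bbphi}\geq P^*_{\bbphi}$, which points the wrong way. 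The inequality $D^*_{\bbphi}\leq P^*$ is genuinely the upper half of Theorem \ref{theorem_param_duality} (proved there by comparing the parameterized and unparameterized dual functions directly), so cite that bound rather than attempting to rederive it from $P^*_{\bbphi}\leq P^*$.

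The genuine gap is in the lower bound. You invoke the universal approximation theorem in its classical form: uniform approximation of \emph{continuous} functions on compact sets. But the policies $\bbp\in\ccalP$ to be approximated are only assumed bounded and integrable with respect to $m(\bbh)$, and in the motivating examples they are genuinely discontinuous (the scheduling policies of Examples \ref{example_interference} and \ref{example_fdma} take values in $\{0,1\}^m$, and threshold-type optimal allocations are common). A near-optimal $\bbp$ therefore need not be continuous, and the continuous-function version of the theorem does not apply to it. The paper avoids this by citing Hornik's density-\emph{in-probability} result for \emph{measurable} functions (Theorem \ref{theorem_approx}) and then converting ``the set where the approximation error exceeds $\eps'$ has measure less than $\eps'$'' into the expected-sup-norm bound of Definition \ref{def_universal}: split $\ccalH$ into the good set and the failure set $\ccalK_{\eps'}$, and bound the contribution of the latter by $2\Gamma\, m(\ccalK_{\eps'})$, where $\Gamma$ is the uniform bound on policies. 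This is the same boundedness device you deploy against the tail of an unbounded $\ccalH$, but aimed at the failure set of the approximation rather than at unboundedness of the channel. Your truncation argument handles an unbounded $\ccalH$ but not a discontinuous target; to repair it you would need to interpose Lusin's theorem (replace the measurable $\bbp$ by a continuous function off a set of small measure) or cite the measurable-function version of the approximation theorem directly, as the paper does.
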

%%%%%%%%%%%%%%%%%%%%%%%%%%%%%%%%%%%%%%%%%%%%%%%%%%%%%%%%%%%%%%%%%
%%%%%%%%%%%%%%%%%%%%%%%%%
%%%%%%% P R O O F %%%%%%%%%%%
%%%%%%%%%%%%%%%%%%%%%%%%%%
%%%%%%%%%%%%%%%%%%%%%%%%%%%%%%%%%%%%%%%%%%%%%%%%%%%%%%%%%%%%%%%%%%%%%
\begin{myproof}
See Appendix \ref{sec_cor_proof}.
\end{myproof}
%%%%%%%%%%%%%%%%%%%%%%%%%%%%%%%%%%%%%%%%%%%%%%%%%%%%%%%%%%%%%%%%%

With Theorem \ref{cor_param_duality} we establish the null duality gap property of a resource allocation problem of the form in \eqref{eq_param_problem} given a DNN parameterization that achieves arbitrarily small function approximation accuracy as the dimension of the DNN parameter---i.e. the number of hidden nodes---grows to infinity. While such a parametrization is indeed guaranteed to exist through the universal function approximation theorem, one would require a DNN with arbitrarily large size to obtain such a network in practice. As such, the suboptimality bounds presented in Theorem \ref{theorem_param_duality}, which require only an DNN-approximation of given accuracy $\eps$ provide the more practical characterization of \eqref{eq_param_problem}, while the result in Theorem \ref{cor_param_duality} suggests DNNs can be used find parameterizations of arbitrarily strong accuracy.

%%%%%%%%%%%%%%%%%%%%%%%%%%%%%%%%%%%%%%%%%%%%%%%%%%%%%%%%%%%%%%%%%%
%%%%%%%%%%%%%%%%%%%%%%%%%%
%%%%%%%% S E C T I O N %%%%%%%%%%%
%%%%%%%%%%%%%%%%%%%%%%%%%%%
%%%%%%%%%%%%%%%%%%%%%%%%%%%%%%%%%%%%%%%%%%%%%%%%%%%%%%%%%%%%%%%%%%%%%%
%\section{Theoretical Analysis}\label{sec_analysis}
%
%\input{analysis.tex}

%%%%%%%%%%%%%%%%%%%%%%%%%%%%%%%%%%%%%%%%%%%%%%%%%%%%%%%%%%%%%%%%%
%%%%%%%%%%%%%%%%%%%%%%%%%
%%%%%%% S E C T I O N %%%%%%%%%%%
%%%%%%%%%%%%%%%%%%%%%%%%%%
%%%%%%%%%%%%%%%%%%%%%%%%%%%%%%%%%%%%%%%%%%%%%%%%%%%%%%%%%%%%%%%%%%%%%
\section{Simulation Results}\label{sec_simulation_results}

In this section, we provide simulation results on using the proposed primal-dual learning method to solve for DNN-parameterizations of resource allocation in a number of common problems in wireless communications that take the form in \eqref{eq_problem}. For the simulations performed, we employ a stochastic policy and implement the REINFORCE-style policy gradient described in Section \ref{sec_policy_grad}. In particular, we select the policy distribution $\pi_{\bbtheta,\bbh}$ as a truncated Gaussian distribution. The truncated Gaussian distribution has fixed support on the domain $[0,p_{\max}]$. The output layer of the DNN $\bbphi(\bbh,\bbtheta) \in \reals^{2m}$ is the set of $m$ means and standard deviations to specify the respective truncated Gaussian distributions, i.e. $\bbphi(\bbh,\bbtheta) := [\mu^1; \sigma^1; \mu^2; \sigma^2; \hdots; \mu^m; \sigma^m]$. Furthermore, to represent policies that are bounded on the support interval, the output of the last layer is fed into a scaled sigmoid function such that the mean lies in the area of support and the variance is no more than the square root of the support region. In the following experiments, this interval is [0, 10]. 

For updating the primal and dual variables, we use a batch size of 32. The primal dual method is performed with an exponentially decaying step size for dual updates and the ADAM optimizer \cite{kingma2014adam} for the DNN parameter update. Both updates start with a learning rate of 0.0005, while random channel conditions are generated with an exponential distribution with parameter $\lambda=2$ (to represent the square of a unit variance Rayleigh fading channel state). 

\subsection{Simple AWGN channel}

 %%%%%%%%%%%%%%%%%%%%%%%%%%%%%%%%
%%%%%%%%%% F I G U R E %%%%%%%%%%%%%%%%%
%%%%%%%%%%%%%%%%%%%%%%%%%%%%%%%%

\begin{figure}
\centering
\def\layersep{2.5cm}

\begin{tikzpicture}[shorten >=1pt,->,draw=black!50, node distance=.8*\layersep, scale=0.6, transform shape]
    \tikzstyle{every pin edge}=[<-,shorten <=1pt]
    \tikzstyle{neuron}=[circle,fill=black!25,minimum size=17pt,inner sep=0pt]
    \tikzstyle{input neuron}=[neuron, fill=green!25];
    \tikzstyle{output neuron}=[neuron, fill=red!25];
    \tikzstyle{hidden neuron}=[neuron, fill=blue!25];
    \tikzstyle{annot} = [text width=4em, text centered]

    % Draw the input layer nodes
    \foreach \name / \y in {1}
    % This is the same as writing \foreach \name / \y in {1/1,2/2,3/3,4/4}
        \node[input neuron, pin=left: $h^{i}$] (I-\name) at (0,-4) {};
        
% Draw the hidden layer nodes
    \foreach \name / \y in {1,...,8}
        \path[yshift=0.3cm]
            node[hidden neuron] (H1-\name) at (\layersep,-\y cm) {};

    % Draw the hidden layer nodes
    \foreach \name / \y in {1,...,4}
        \path[yshift=0.3cm]
            node[hidden neuron] (H-\name) at (2*\layersep,-2 cm-\y cm) {};

    % Draw the output layer node
    	\node[output neuron,pin={[pin edge={->}]right:$\mu^i$}, right of=H-2] (O1) {};
	\node[output neuron,pin={[pin edge={->}]right:$\sigma^i$}, right of=H-3] (O2) {};

    % Connect every node in the input layer with every node in the
    % hidden layer.
    \foreach \source in {1}
        \foreach \dest in {1,...,8}
            \path (I-\source) edge (H1-\dest);
            
     \foreach \source in {1,...,8}
        \foreach \dest in {1,...,4}
            \path (H1-\source) edge (H-\dest);

    % Connect every node in the hidden layer with the output layer
    \foreach \source in {1,...,4}
        \path (H-\source) edge (O1);
        
     \foreach \source in {1,...,4}
        \path (H-\source) edge (O2);

    % Annotate the layers
  %  \node[annot,above of=H-1, node distance=1cm] (hl) {Hidden layer};
  %  \node[annot,left of=hl] {Input layer};
   % \node[annot,right of=hl] {Output layer};
\end{tikzpicture}

% End of code
\caption{Neural network architecture used for simple AWGN channel. Each channel state $h^i$ is fed into an independent SISO network with two hidden layers of size 8 and 4, respectively. The DNN outputs a mean $\mu^i$ and standard deviation $\sigma^i$ for a truncated Gaussian distribution.}
\label{fig_nn_siso}
\end{figure}
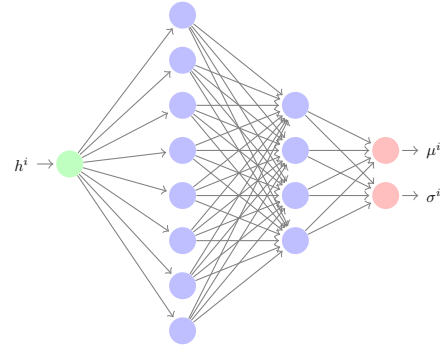

%\begin{figure}
%	\centering
%	\includegraphics[width=0.8\linewidth]{wireless_1.png} 
%	\caption{Neural network architecture used for simple AWGN channel. Each channel state $h_i$ is fed into an independent network for two hidden layers of size 8 and 4, respectively. The DNN outputs a mean $\mu_i$ and standard deviation $\sigma_i$ for a truncated Gaussian distribution.}
%	\label{fig_nn_siso}
%\end{figure}
%
%

 %%%%%%%%%%%%%%%%%%%%%%%%%%%%%%%%
%%%%%%%%%% F I G U R E %%%%%%%%%%%%%%%%%
%%%%%%%%%%%%%%%%%%%%%%%%%%%%%%%%

\begin{figure*}[t]
\centering
\includegraphics[width=0.32\linewidth, height=.17\textheight]{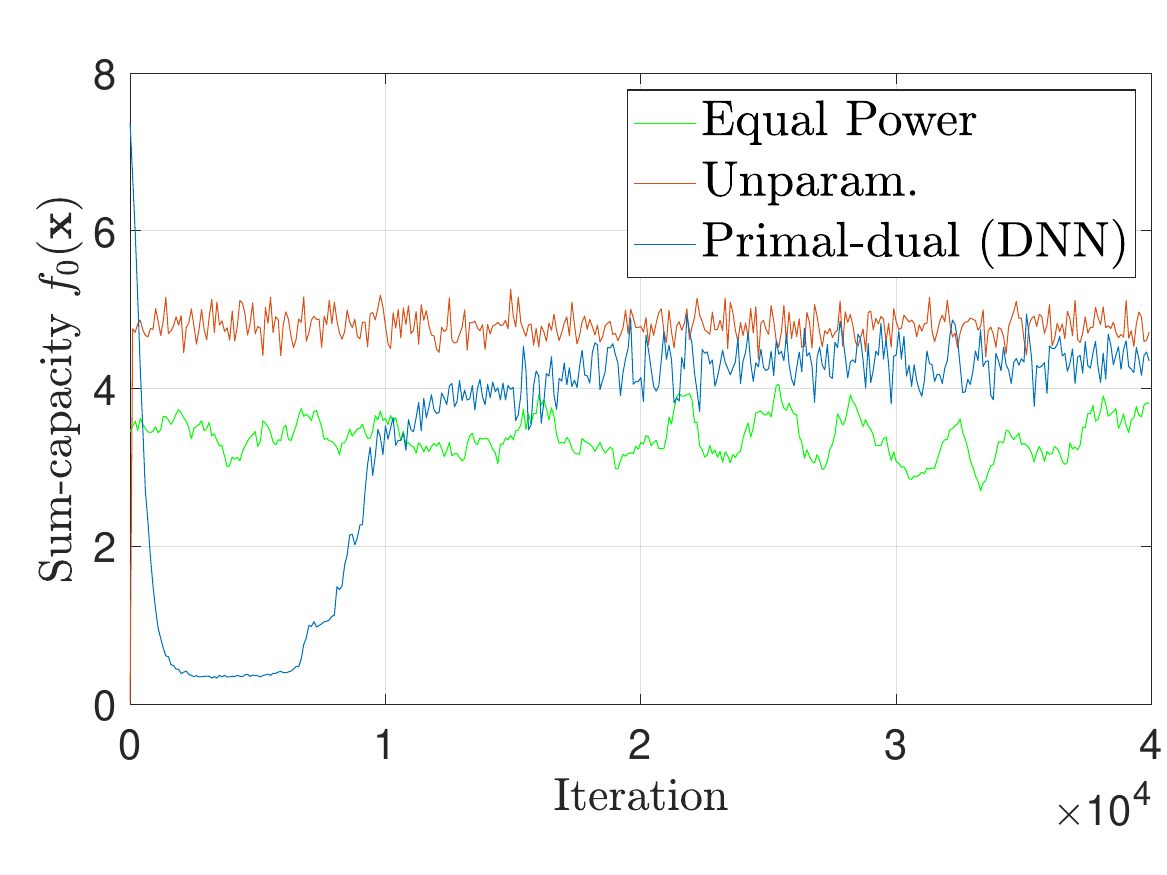} 
\includegraphics[width=0.32\linewidth, height=.17\textheight]{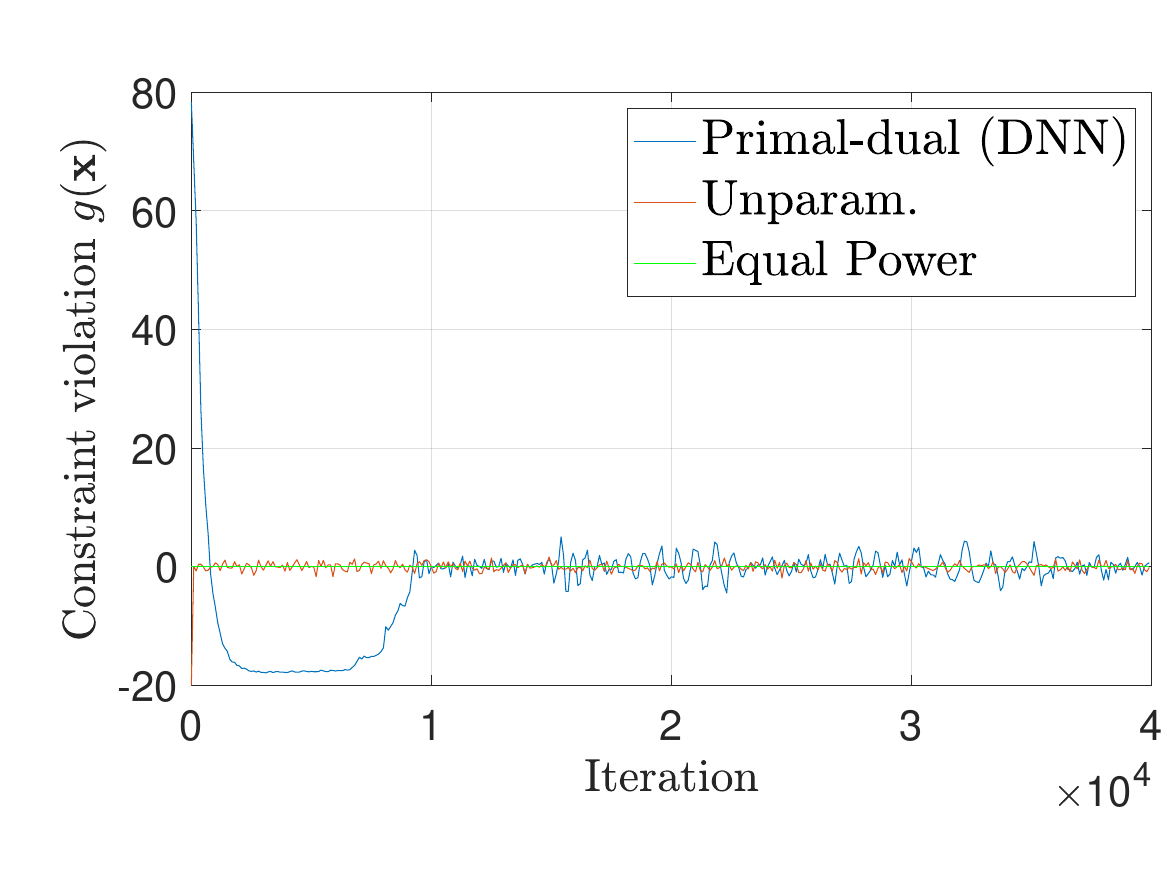} 
\includegraphics[width=0.32\linewidth, height=.17\textheight]{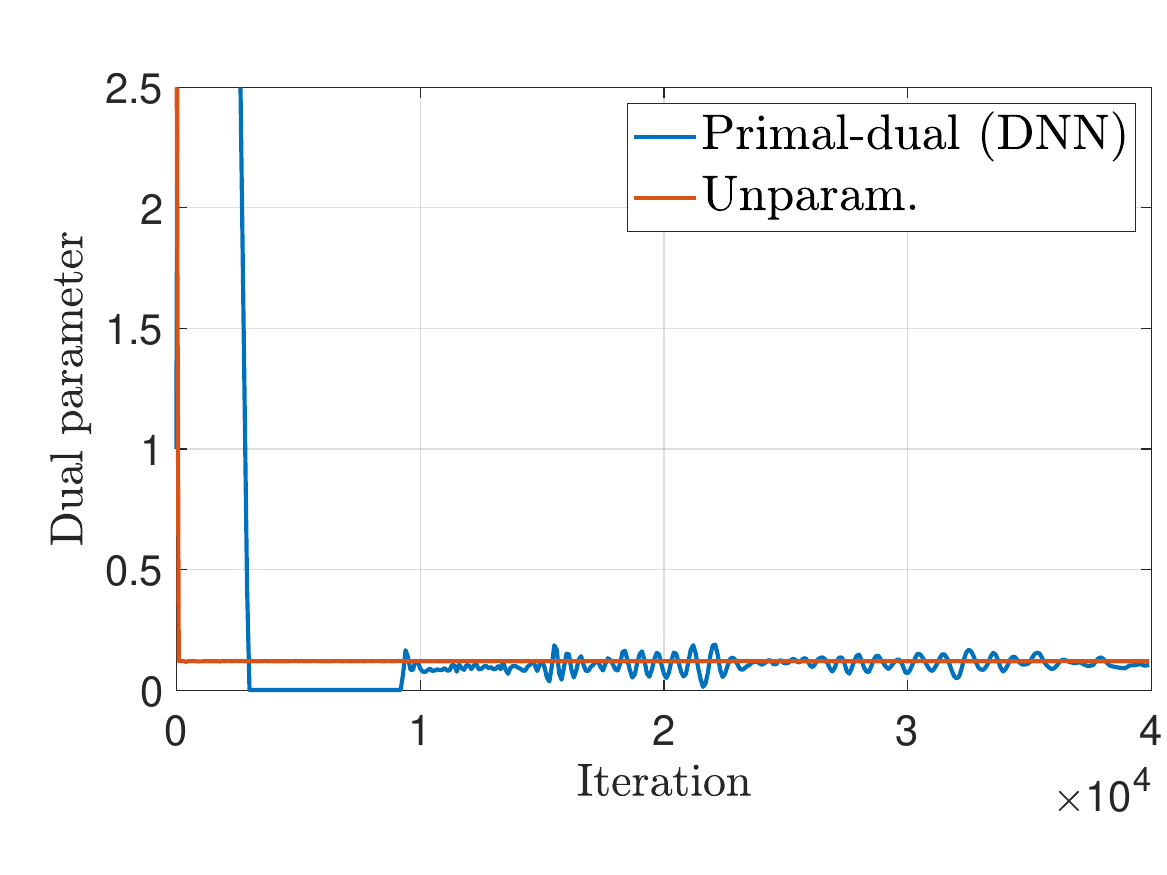} 
\caption{Convergence of (left) objective function value, (center) constraint value, and (right) dual parameter for simple capacity problem in \eqref{eq_capacity_problem} using proposed DNN method with policy gradients, the exact unparameterized solution, and an equal power allocation amongst users. The DNN parameterization obtains near-optimal performance relative to the exact solution and outperforms the equal power allocation heuristic.}\label{fig_simple_results}
\end{figure*}

\begin{figure}
	\centering
	\includegraphics[width=0.9\linewidth]{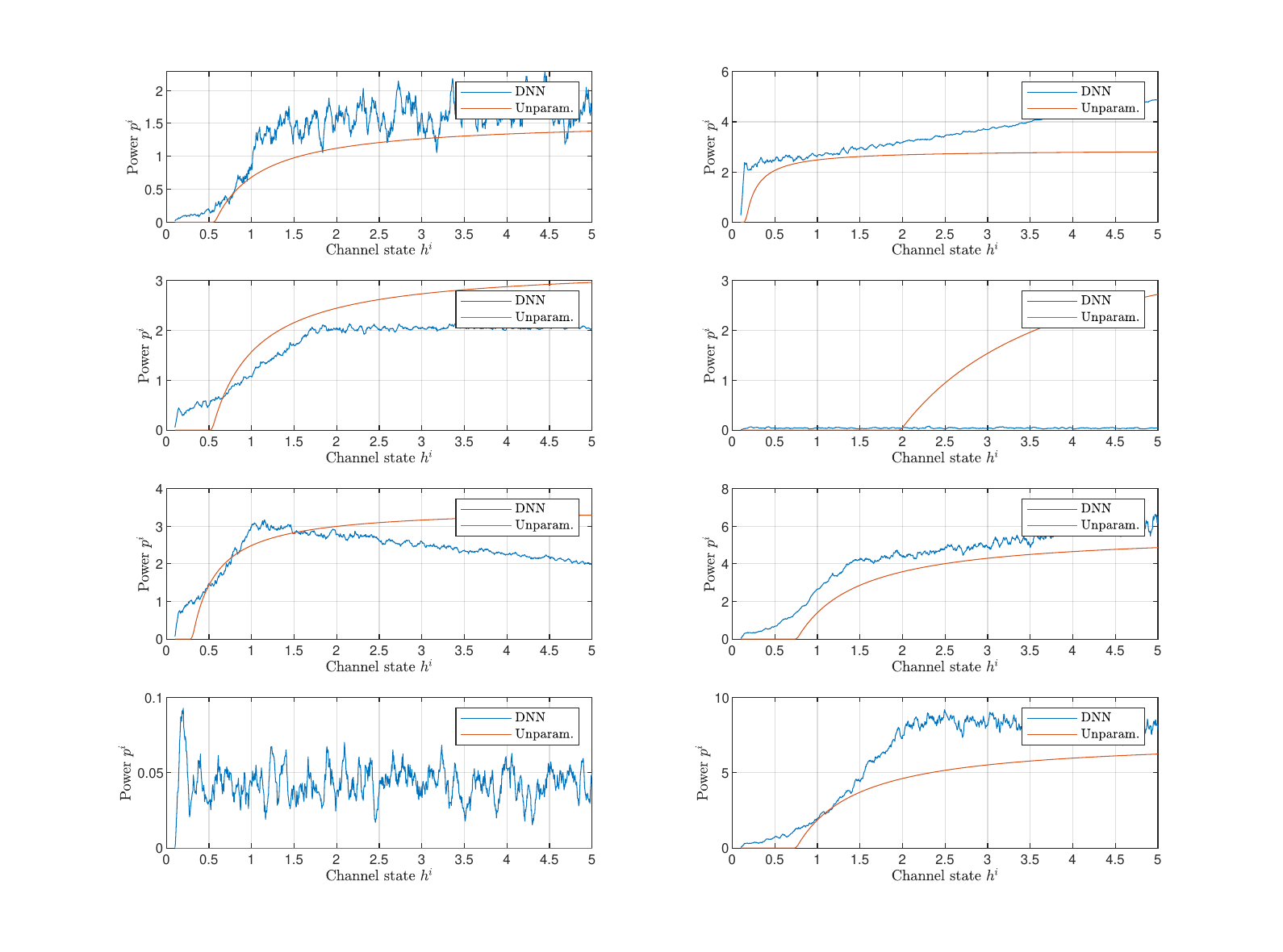} 
	\caption{Example of 8 representative resource allocation policy functions found through DNN parameterization and unparameterized solution. Although the policies differ from the analytic solution, many contain similar shapes. Overall, the DNN method learns variations on the optimal policies that nonetheless achieve similar performance.}
	\label{fig_learned_policies}
\end{figure}

To begin, we simulate the learning of a DNN to solve the problem of maximizing total capacity over a set of simple AWGN wireless fading channel. In this case, each user is given a dedicated channel to communicate, and we wish to allocate resources between users within a total expected power budget $p_{\max}$. In this case, the capacity over the channel can be modeled as $\log (1 + \text{SNR}^i)$, where $\text{SNR}^i := h^i p^i(h^i)/v^i$ is the signal-to-noise ratio experienced by user $i$ and $v^i >0$ is the noise variance. The capacity function for the $i$th user is thus given by $f^i(p^i(h^i),h^i) :=  \log (1 + h^i p^i(h^i)/v^i)$. We are interested in maximizing the weighted aggregate throughput across all users, with user $i$ weighted by $w^i \geq 0$. The total capacity problem can be written as
\begin{align}\label{eq_capacity_problem}
P_{\bbphi}^* &:= \max_{\bbtheta,\bbx} \sum_{i=1}^m w^i x^i \\
&\st \ x^i \leq \E_{h^i} \left[ \log (1 + h^i \phi^i(h^i,\bbtheta)/v^i)\right], \ \forall i\nonumber\\ 
&\qquad \E_{\bbh} \left[ \sum_{i=1}^m \phi^i(h^i,\bbtheta) \right] \leq p_{\max}. \nonumber 
\end{align}
Note that, despite the non-convex structure of the problem in \eqref{eq_capacity_problem}, the loose coupling over the resource allocation variables allows for this problem to be solved exactly without any DNN parametrization using a simple dual stochastic gradient (SGD) method---see, e.g., \cite{wang2010stochastic}. Nonetheless, this is an instructive example with which to validate our approach by seeing if the DNN is capable of learning resource allocation policies that closely match the exact optimal solutions found without any parametrization. Furthermore, the model-free learning capabilities of the DNN parametrization make the proposed learning method applicable in cases in which the, e.g., capacity function is not known.

To have the the outputs of the DNN match the same form as the analytic solution \eqref{eq_capacity_problem}, we construct $m$ independent, uncoupled DNNs for each user. Each channel gain $h^i$ is provided as input to a single-input-single-output (SISO) DNN, which outputs a power allocation $p^i(h^i)$. In particular, each DNN is constructed with two hidden layers, of size 8 and 4, respectively. In addition, each layer is given a ReLU activation function, i.e. $\mathbb{\sigma}(\bbz) = [\bbz]_+$; see Figure \ref{fig_nn_siso} for the architecture. 

The results of a simple experiment with $m=20$ users with random weights $w^i$ and variances $v^i$ is shown in Figure \ref{fig_simple_results}. We further set the maximum power as $p_{\max} = 20$. In this plot we compare the performance of the DNN primal dual learning method with the exact, unparameterized solution and an equal power allocation policy. In the equal power allocation policy, we allocate a power of $p^i = p_{\max}/m$ for all users. Here we see in the left figure that the the total capacity achieved by the DNN primal-dual method converges to roughly the same value as the exact solution found by SGD. Likewise, in the center figure, we plot the value of the constraint function. Here, we see that primal-dual converges to 0, thus implying feasibility of the learned policy. Finally, in the left figure we see that the dual variable obtained by the DNN matches that of the unparameterized. 

\begin{remark}\normalfont
Observe that the learning process may take many iterations to converge than the unparameterized solution due to the many parameters that need to be learned and the model-free nature of the learning process. It is generally the case that the training is done offline before implementation, in which the case the learning rate does not play a significant factor. In the case in which the weights $w^i$ and channel noise power $v^i$ may change over time, we may use the existing as a ``warm-start'' to quickly adapt to the changes in the model. For problem parameters that are changing fast, there have been higher order optimization methods that have been proposed to adapt to changing conditions of the problem \cite{eisen2018learning}. 
\end{remark}

In Figure \ref{fig_learned_policies} we show the actual learned policies from both methods for 8 example users. Here, comparing the optimal unparameterized policies to those learned with DNNs, we see in some cases the policies learned with the DNN match the shape and function, while others differ. For instance, the fourth user shown in Fig \ref{fig_learned_policies} is not assigned any resources by the DNN-based policy, while the seventh user is likewise not given any resources by the unparameterized policy. In any case, the overall performance achieved matches that of the the exact solution. We further note that this phenomenon of certain users not being given any resources in both policies occurs because our only goal in \eqref{eq_capacity_problem} is to maximize the sum-capacity, which does not necessitate that every user gets to transmit. To impose this condition, we may add a constraint to \eqref{eq_capacity_problem} that specifies a maximum average capacity for all users to achieve. 

 %%%%%%%%%%%%%%%%%%%%%%%%%%%%%%%%
%%%%%%%%%% F I G U R E %%%%%%%%%%%%%%%%%
%%%%%%%%%%%%%%%%%%%%%%%%%%%%%%%%
\begin{figure}
\centering
\includegraphics[height=.22\textheight]{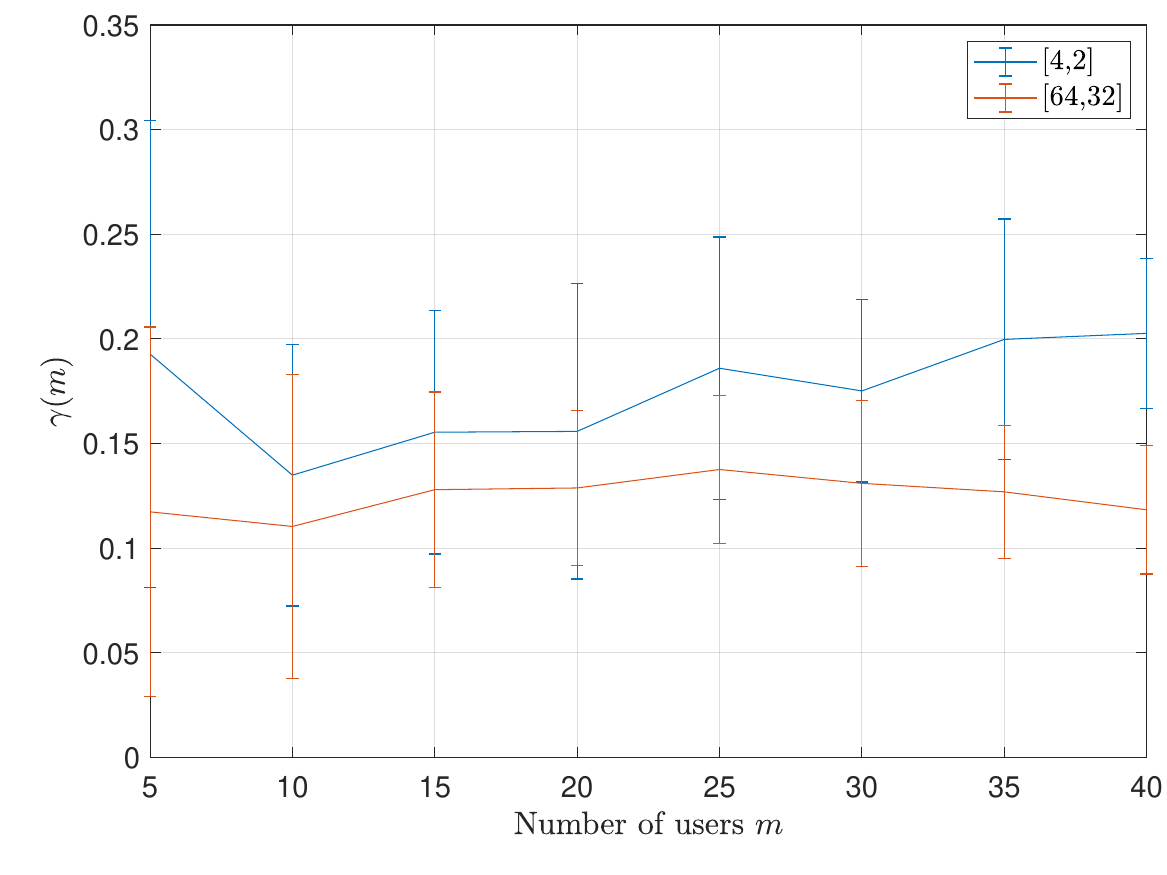} 
\caption{Optimality gap between optimal objective value and learned policy for the simple capacity problem in \eqref{eq_capacity_problem} for different number of users $m$ and DNN architectures. The results are obtained across 10 randomly initialized simulations. The mean is plotted as the solid lines, while the one standard deviation above and below the mean is show with error bars.}
\label{fig_simple_gap}
\end{figure}

For a more thorough comparison of the DNN approach to the exact solution to \eqref{eq_capacity_problem}, we perform multiple experiments for varying number of users and with different DNN layer sizes. In Figure \ref{fig_simple_gap}, we plot the normalized optimality gap between the precise solution $P^*$ and the parameterized solution $\hat{P}_{\bbphi}^*$ found after convergence of the primal-dual method. If we define $P^*(m)$ and $\hat{P}_{\bbphi}^*(m)$ to be the sum capacities achieved by the optimal policy and DNN-based policy found after 40,000 learning iterations, respectively, with $m$ users, the normalized optimality gap can be computed as
\begin{align}
\gamma(m) := \left| \frac{\hat{P}_{\bbphi}^*(m) - P^*(m)}{P^*(m)} \right|.
\end{align}
The blue line shows the results for small DNNs with layer sizes 4 and 2, while the red line shows results for networks with hidden layers of size 32 and 16. Observe that, as the number of channels grows, the DNNs of fixed size achieve the same optimality. Further note that, while the blue line shows that even small DNNs can find near-optimal policies for even large networks, increasing the DNN size increases the expressive power of the DNN, thereby improving upon the suboptimality that can be obtained.

\subsection{Interference channel}
 %%%%%%%%%%%%%%%%%%%%%%%%%%%%%%%%
%%%%%%%%%% F I G U R E %%%%%%%%%%%%%%%%%
%%%%%%%%%%%%%%%%%%%%%%%%%%%%%%%%
%\begin{figure}
%	\centering
%	\includegraphics[width=0.8\linewidth]{wireless_2.png} 
%	\caption{DNN architecture used for Interference channel. The entire vector of channel channel state $h^i$ is fed into a single network for two hidden layers of size 32 and 16, respectively. The DNN outputs a mean $\mu^i$ and standard deviation $\sigma^i$ for a truncated Gaussian distribution for each channel.}
%	\label{fig_nn_mimo}
%\end{figure}

\begin{figure}
\centering
\def\layersep{2.5cm}

\begin{tikzpicture}[shorten >=1pt,->,draw=black!50, node distance=.8*\layersep, scale=0.6, transform shape]
    \tikzstyle{every pin edge}=[<-,shorten <=1pt]
    \tikzstyle{neuron}=[circle,fill=black!25,minimum size=17pt,inner sep=0pt]
    \tikzstyle{tneuron}=[circle,line width=1.2pt,minimum size=17pt,inner sep=0pt]
    \tikzstyle{input neuron}=[neuron, fill=green!25];
    \tikzstyle{output neuron}=[neuron, fill=red!25];
    \tikzstyle{hidden neuron}=[tneuron, fill=blue!25];
    \tikzstyle{annot} = [text width=4em, text centered]

    % Draw the input layer nodes
  %  \foreach \name / \y in {1,...,6}
    % This is the same as writing \foreach \name / \y in {1/1,2/2,3/3,4/4}
        \node[input neuron, pin=left: $h^{1}$] (I-1) at (0,-1) {};
        \node[input neuron, pin=left: $h^{2}$] (I-2) at (0,-2) {};
        \node[input neuron, pin=left: $h^{3}$] (I-3) at (0,-3) {};
        
        \node[input neuron, pin=left: $h^{m-2}$] (I-4) at (0,-5) {};
        \node[input neuron, pin=left: $h^{m-1}$] (I-5) at (0,-6) {};
        \node[input neuron, pin=left: $h^{m}$] (I-6) at (0,-7) {};
        
% Draw the hidden layer nodes
    \foreach \name / \y in {1,2,3,4,5,6,7,8}
        \path[yshift=0.3cm]
            node[hidden neuron] (H1-\name) at (\layersep,-\y cm) {};

    % Draw the hidden layer nodes
    \foreach \name / \y in {1,...,4}
        \path[yshift=0.3cm]
            node[hidden neuron] (H-\name) at (2*\layersep,-2 cm-\y cm) {};

    % Draw the output layer node
    	\node[output neuron,pin={[pin edge={->}]right:$\mu^1$}] (O1) at (3*\layersep,-1 cm) {};
	\node[output neuron,pin={[pin edge={->}]right:$\sigma^1$}] (O2) at (3*\layersep,-2 cm) {};
	\node[output neuron,pin={[pin edge={->}]right:$\mu^2$}] (O3) at (3*\layersep,-3 cm) {};
	\node[output neuron,pin={[pin edge={->}]right:$\sigma^2$}] (O4) at (3*\layersep,-4 cm) {};
	\node[output neuron,pin={[pin edge={->}]right:$\mu^m$}] (O5) at (3*\layersep,-6 cm) {};
	\node[output neuron,pin={[pin edge={->}]right:$\sigma^m$}] (O6) at (3*\layersep,-7 cm) {};
	%\node[output neuron,pin={[pin edge={->}]right:$\sigma^i$}, right of=H-3] (O2) {};

    % Connect every node in the input layer with every node in the
    % hidden layer.
    \foreach \source in {1,...,6}
        \foreach \dest in {1,...,8}
            \path (I-\source) edge (H1-\dest);
            
     \foreach \source in {1,...,8}
        \foreach \dest in {1,...,4}
            \path (H1-\source) edge (H-\dest);

    % Connect every node in the hidden layer with the output layer
    \foreach \source in {1,...,4}
        \path (H-\source) edge (O1);
        
     \foreach \source in {1,...,4}
        \path (H-\source) edge (O2);
        
            \foreach \source in {1,...,4}
        \path (H-\source) edge (O3);
        
     \foreach \source in {1,...,4}
        \path (H-\source) edge (O4);
        
            \foreach \source in {1,...,4}
        \path (H-\source) edge (O5);
        
     \foreach \source in {1,...,4}
        \path (H-\source) edge (O6);

    % Annotate the layers
  %  \node[annot,above of=H-1, node distance=1cm] (hl) {Hidden layer};
  %  \node[annot,left of=hl] {Input layer};
   % \node[annot,right of=hl] {Output layer};
\end{tikzpicture}

% End of code
\caption{Neural network architecture used for interference channel problem in \eqref{eq_interference_problem0}. All channel states $\bbh$ are fed into a MIMO network with two hidden layers of size 32 and 16, respectively (each circle in hidden layers represents 4 neurons). The DNN outputs means $\mu^i$ and standard deviations $\sigma^i$ for $i$ truncated Gaussian distributions.}
\label{fig_nn_mimo}
\end{figure}
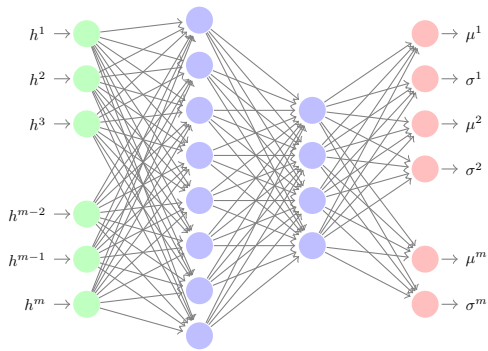

We provide further experiments on the use of neural networks in maximizing capacity over the more complex problem of allocating power over an (IC) interference channel. We first consider the problem of $m$ transmitters communicating with a common receiver, or base station. Given the fading channels $\bbh := [h^1; \hdots; h^m]$, the capacity is determined using the signal-to-noise-plus-interference ratio (SNIR), which for transmission $i$ is given as $\text{SNIR}^i := h^{i} p^i(\bbh))/(v^i + \sum_{j \neq i} h^{j} p^j(\bbh))$. The resulting capacity function observed by the receiver from user $i$ is then given by $f^i(p^i(\bbh),\bbh) :=  \log (1 + h^{ii} p^i(\bbh)/(v^i + \sum_{j \neq i} h^{ji} p^j(\bbh)))$ and the DNN-parameterized problem is written as
\begin{align}\label{eq_interference_problem0}
P_{\bbphi}^* &:= \max_{\bbtheta,\bbx} \sum_{i=1}^m w^i x^i \\
&\st \ x^i \leq \E_{\bbh} \left[ \log \left(1 + \frac{h^{i} \phi^i(\bbh,\bbtheta)}{v^i + \sum_{j \neq i} h^{i} \phi^j(\bbh,\bbtheta)}\right)\right], \ \forall i\nonumber\\ 
&\qquad \E_{\bbh} \left[ \sum_{i=1}^m \phi^i(\bbh,\bbtheta) \right] \leq p_{\max}. \nonumber 
\end{align}
Here, the coupling of the resource policies in the capacity constraint make the problem in \eqref{eq_interference_problem0} very challenging to solve. Existing dual method approaches are ineffective here because the non-convex capacity function cannot be minimized exactly. This makes the primal-dual approach with the DNN parametrization a feasible alternative. However, this means that we cannot provide comparison to the analytic solution, but instead compare against the performance of some standard, model-free heuristic approaches. 

 %%%%%%%%%%%%%%%%%%%%%%%%%%%%%%%%
%%%%%%%%%% F I G U R E %%%%%%%%%%%%%%%%%
%%%%%%%%%%%%%%%%%%%%%%%%%%%%%%%%
\begin{figure*}
\centering
\includegraphics[width=0.4\linewidth, height=.17\textheight, keepaspectratio]{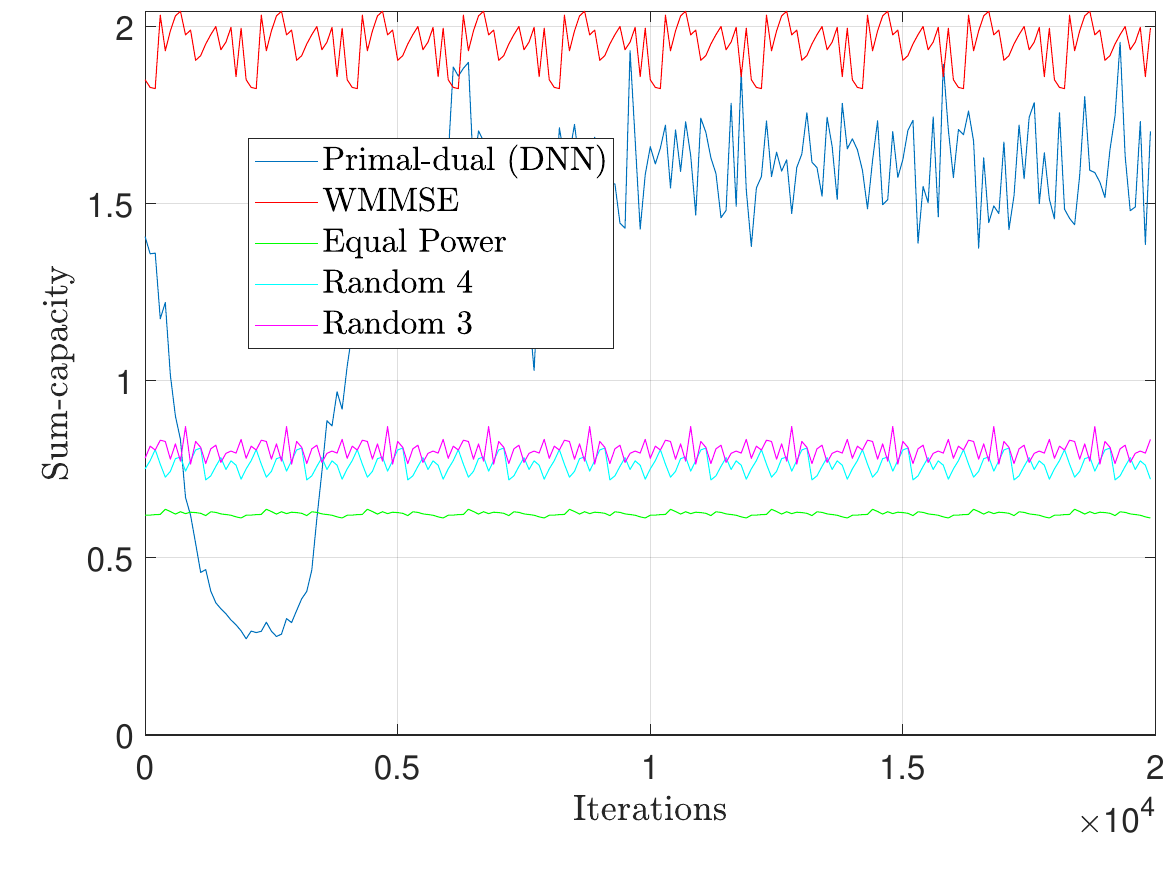} \qquad
\includegraphics[width=0.4\linewidth, height=.17\textheight, keepaspectratio]{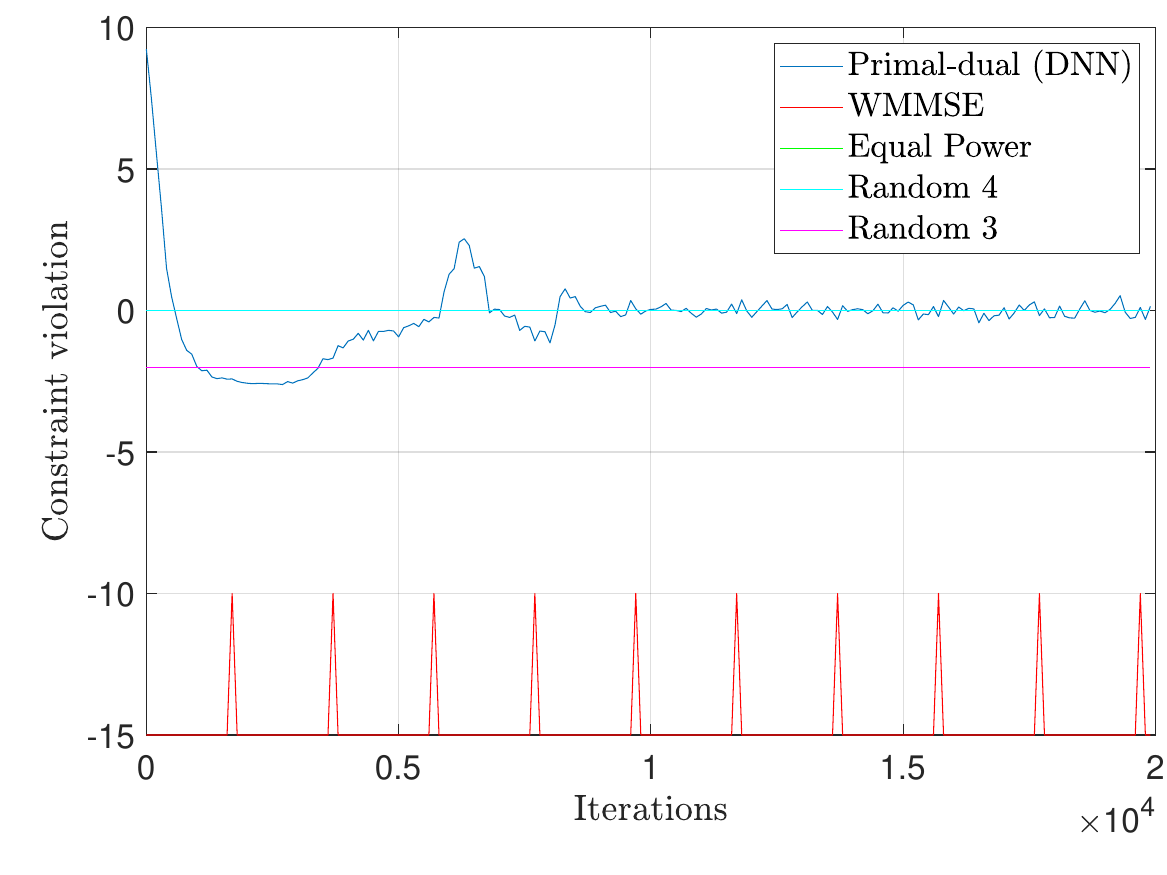} 
\caption{Convergence of (left) objective function value and (right) constraint value for interference capacity problem in \eqref{eq_interference_problem0} using proposed DNN method, WMMSE, and simple model free heuristic power allocation strategies $m=20$ users. The DNN-based primal dual method learns a policy that achieves close performance to WMMSE, better performance than the other model free heuristics, and moreover converges to a feasible solution.} \label{fig_interference_results}
\end{figure*}

As the power allocation of user $i$ will depend on the channel conditions of all users, due to the coupling in the interference channel, rather than the $m$ SISO networks used in the previous example, we construct a single multiple-input-multiple-output (MIMO) DNN architecture, shown in Figure  \ref{fig_nn_mimo}, with a two layers of size 32 and 16 hidden nodes. In this architecture, all channel conditions $\bbh$ are fed as inputs to the DNN, which outputs the truncated Gaussian distribution parameters for every user's policy.  In Figure \ref{fig_interference_results} we plot the convergence of the objective value and constraint value learned using the DNN parameterization and those obtained by three model-free heuristics for a system with $m=20$ users. These include (i) an equal division of power $p_{\max}=20$ across all $m$ users, (ii) randomly selecting 4 users to transmit with power $p = 5$, and (iii) randomly selecting 3 users to transmit with power $p=6$. While not a model free method, we also compare performance against the well-known heuristic method WMMSE \cite{shi2011iteratively}. Here, we observe that, as in the previous example, all values converge to stationary points, suggesting that the method converges to a local optimum. We can also confirm in the right plot of the constraint value that the learned policy is indeed feasible. It can be observed that the performance of the DNN-based policy leaned with the primal dual method is superior to that of the other model free heuristic methods, while obtaining close performance to that of WMMSE, which we stress does indeed require model information to implement.

To study another key aspect of the parameters of the DNN---namely the output distribution $\pi_{\bbtheta,\bbh}$---we make a comparison against the performance achieved using two natural choices for distribution in the power allocation problem in \eqref{eq_interference_problem0}. The primary motivation behind using a truncated Gaussian is that the parameters, namely mean and variance, are easy to interpret and learn. The Gamma distribution, alternatively,  has parameters that are less interpretable in this scenario and the outputs may vary as the parameters change.  In Figure \ref{fig:gamma} we demonstrate the comparison of performance between using a Gamma distribution and the truncated Gaussian distribution. Here, we observe that the performance of the method does indeed rely on proper choice of output distribution, as it can be seen that the truncated Gaussian distribution induces stronger performance relative to a Gamma distribution.
\begin{figure}
\centering
\includegraphics[width=0.6\linewidth]{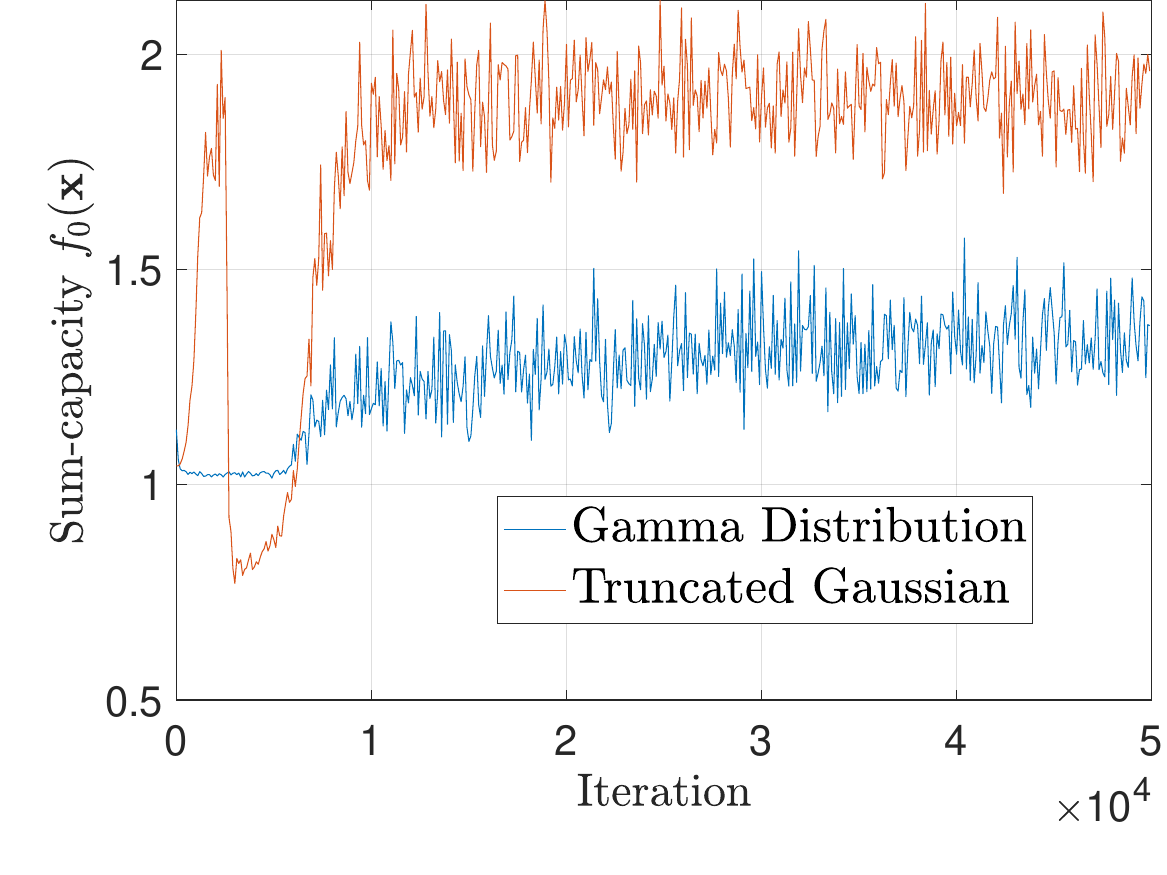}
\caption{Comparison of performance using Gamma and truncated Gaussian distributions in output layer of a DNN.}
\label{fig:gamma}
\end{figure}

Our last series of experiment concerns the classical problem in interference management in which there are $m$ transmitter/receiver pairs sending information to each other. The allocation policy for each user is given as a binary decision $\alpha^i \in \{0,1\}$ of whether or not to transmit with power $p_0$---a variation of this problem is described further detail in Example \ref{example_interference}. In this case, the SNIR for transmission $i$ can be given as $\text{SNIR}^i := h^{ii} p_0 \alpha^i(\bbh))/(v^i + p_0 \sum_{j \neq i} h^{ji} \alpha^j(\bbh))$. The DNN-parameterized problem is written as
\begin{align}\label{eq_interference_problem}
P_{\bbphi}^* &:= \max_{\bbtheta,\bbx} \sum_{i=1}^m w^i x^i \\
&\st \ x^i \leq \E_{\bbh} \left[ \log \left(1 + \frac{h^{ii} p_0 \phi^i(\bbh,\bbtheta)}{v^i + p_0 \sum_{j \neq i} h^{ji} \phi^j(\bbh,\bbtheta)}\right)\right], \ \forall i\nonumber\\ 
&\qquad \E_{\bbh} \left[ \sum_{i=1}^m \phi^i(\bbh,\bbtheta) \right] \leq p_{\max}, \quad \bbphi(\bbh,\bbtheta) \in \{0,1\}^m.\nonumber 
\end{align}
As the case in \eqref{eq_interference_problem0}, this problem cannot be solved exactly. We instead compare the performance against that of the random selection heuristic considered in previous examples.

We plot in Figure \ref{fig_interference_results1} the performance achieved during the learning process for the DNN against the performance of WMMSE and heuristic that randomly selects 2 users to transmit. These simulations are performed on a system of size $m=5$ with a maximum power of $p_{\max} = 20$ and unit weights and variances $w^i=v^i=1$. The DNN has the same fully-connected architecture as used in the previous example. However, given the binary nature of the allocation policies, we employ a Bernoulli distribution as the output policy distribution. In Figure \ref{fig_interference_results1}, we observe that using a DNN learning model, we in fact learn a policy that is close to matching the performance that can be obtained using the WMMSE heuristic and outperforms other model-free heuristics. We further note in the right figure that the learned policy is indeed feasible. This demonstrates the ability of the generic primal-dual learning method to either match or exceed the performance given by heuristic methods that are specifically designed to solve certain problems when applied to problems that do not have a known exact solution. We also stress that the proposed learning method learned such a policy using the model-free learning, thereby not having access to the model for the capacity function, which is necessary in the WMMSE method.

 %%%%%%%%%%%%%%%%%%%%%%%%%%%%%%%%
%%%%%%%%%% F I G U R E %%%%%%%%%%%%%%%%%
%%%%%%%%%%%%%%%%%%%%%%%%%%%%%%%%
\begin{figure*}
\centering
\includegraphics[width=0.4\linewidth, height=.17\textheight, keepaspectratio]{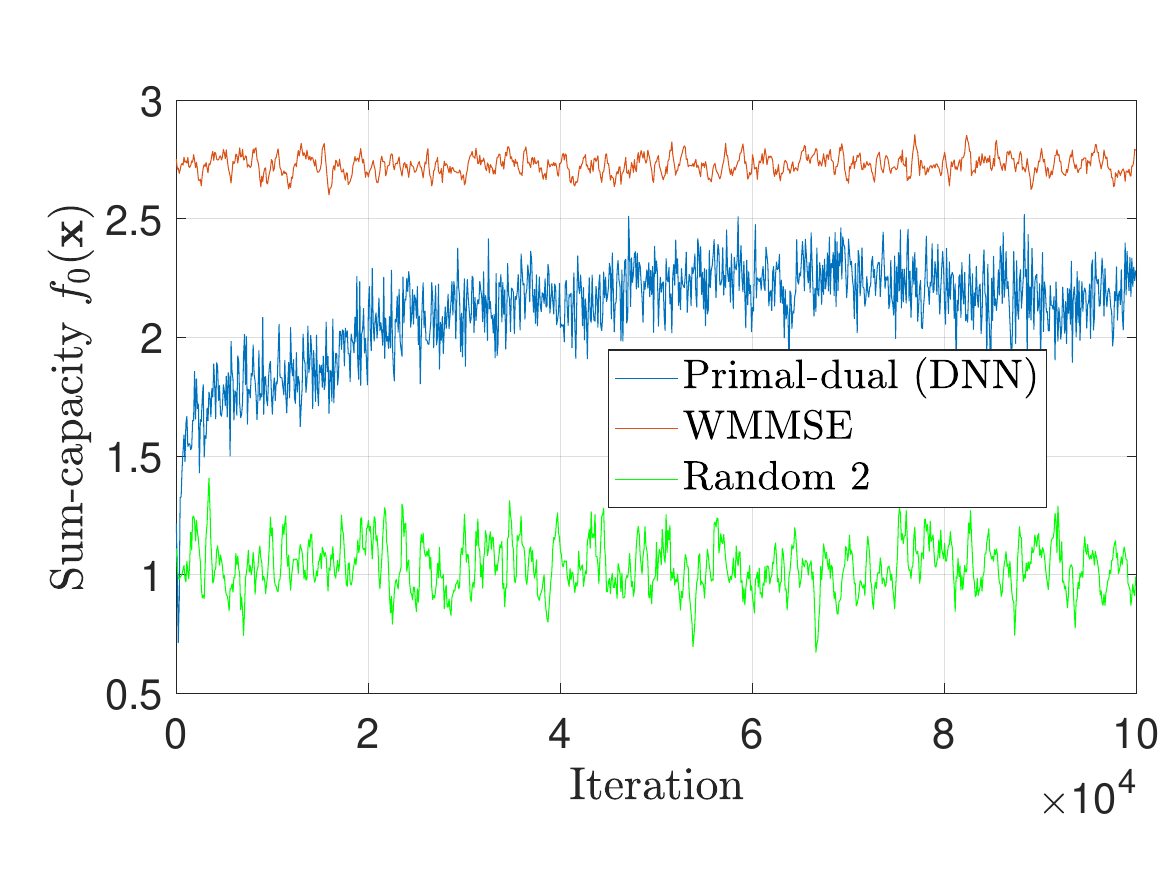} \qquad
\includegraphics[width=0.4\linewidth, height=.17\textheight, keepaspectratio]{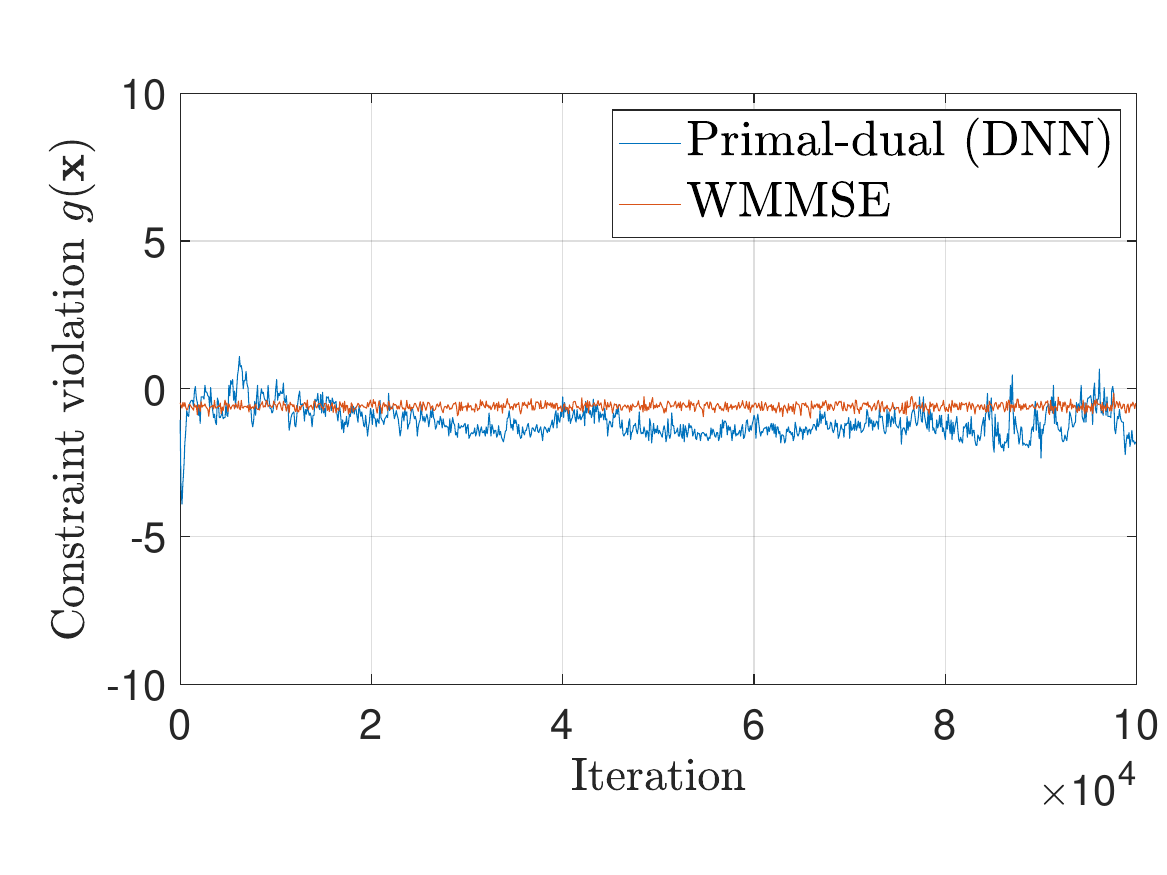} 
\caption{Convergence of (left) objective function value and (right) constraint value for interference capacity problem in \eqref{eq_interference_problem} using proposed DNN method, heuristic WMMSE method, and the equal power allocation heuristic for $m=5$ users. The DNN-based primal dual method learns a policy that is feasible and almost matches the WMMSE method in terms of achieved sum-capacity, without having access to capacity model.} \label{fig_interference_results1}
\end{figure*}

%\red{
%\begin{remark}\label{remark_dnns}\normalfont
%
%It is worth stressing that the the preceding simulation results demonstrating near-optimal performance of DNNs were built using very simple and standard architectures, e.g. 2 hidden layers with 8 and 4 neurons, each. Likewise, the nonlinear activation function used, namely the ReLu, is among the most popularly used activations. Thus, the results obtained were found with minimal tuning of the DNN architecture itself. This is notable because the universal function approximation theorem, and subsequent dual optimality result provided in this paper, rest of the ability to find DNN architectures that can achieve $\eps$ accuracy with respect to the class of functions $\ccalP$ being approximation. This is, in general, not a trivial problem and it is indeed worthwhile to ask how challenging it is to construct architectures that fit this assumption. This is a challenging problem to study analytically due to the complex nature of DNNs, but the simulation results performed here suggest that even simple DNN architectures achieve the necessary approximation properties in such resource allocation problems.  
%\end{remark}
%}

%%%%%%%%%%%%%%%%%%%%%%%%%%%%%%%%%%%%%%%%%%%%%%%%%%%%%%%%%%%%%%%%%
%%%%%%%%%%%%%%%%%%%%%%%%%
%%%%%%% S E C T I O N %%%%%%%%%%%
%%%%%%%%%%%%%%%%%%%%%%%%%%
%%%%%%%%%%%%%%%%%%%%%%%%%%%%%%%%%%%%%%%%%%%%%%%%%%%%%%%%%%%%%%%%%%%%%
\section{Conclusion}
In this paper, we studied a generic formulation for resource allocation in wireless systems. The functional-optimization, non-convex constraints, and lack of model knowledge makes these problems challenging, if not impossible, to solve directly. We used the concept of universal function approximation of deep neural networks and the theory of Lagrangian duality to show that, despite the non-convex nature of these problems, they can be formulated with a finite-dimensional, unconstrained optimization problem in the dual domain with either bounded suboptimality, or in the case of arbitrarily large DNNs, precise optimality with respect to the original problem. The dual domain formulation motivates solving via the use of primal-dual descent methods, which can furthermore be replaced with zeroth-ordered equivalents that estimate gradients without explicit model knowledge. We additionally perform a variety of simulations on common resource allocation problems that demonstrate the effectiveness in DNN-parameterizations to find accurate solutions.
%%%%%%%%%%%%%%%%%%%%%%%%%%%%%%%%%%%%%%%%%%%%%%%%%%%%%%%%%%%%%%%%%
%%%%%%%%%%%%%%%%%%%%%%%%%
%%%%%%% A P P E N D I X     A    %%%%%%%%%%%
%%%%%%%%%%%%%%%%%%%%%%%%%%
%%%%%%%%%%%%%%%%%%%%%%%%%%%%%%%%%%%%%%%%%%%%%%%%%%%%%%%%%%%%%%%%%%%%%
\appendices

\section{Proof of Theorem \ref{theorem_param_duality}}\label{app_param_duality}

%!TEX root = root.tex

To inform the analysis of the suboptimality of $D_{\bbphi}^*$ from \eqref{eq_param_dual0}, we first present an established result previously referenced, namely the null duality gap property of the original problem in \eqref{eq_param_problem}. We proceed by presenting the associated Lagrangian function and dual problem for the constrained optimization problem in \eqref{eq_problem}:
\begin{align}\label{eq_lagrangian}
\ccalL(\bbp(\bbh),\bbx, \bbmu, \bblambda) &:= g_0(\bbx) + \bbmu^T \bbg(\bbx) \\& \qquad + \bblambda^T \left(  \E_{\bbh} \left[ \bbf(\bbp(\bbh),\bbh)\right] - \bbx \right), \nonumber \\
D^* &:= \min_{\bblambda,\bbmu \geq \bb0} \max_{\bbp \in \ccalP, \bbx \in \ccalX} \ccalL(\bbp(\bbh),\bbx, \bbmu, \bblambda). \label{eq_dual}
\end{align}
Despite non-convexity of \eqref{eq_problem}, a known result established in \cite{ribeiro2012optimal} demonstrates that problems of this form indeed satisfy a null duality gap property given the technical conditions previously presented. Due to the central role it plays in the proceeding analysis of \eqref{eq_param_dual0}, we present this theorem here for reference.
\begin{theorem}\cite[Theorem 1]{ribeiro2012optimal}\label{theorem_null_duality}
Consider the optimization problem in \eqref{eq_problem} and its Lagrangian dual in \eqref{eq_dual}. Provided that Assumptions \ref{assumption_nonatomic} and \ref{assumption_slater} hold, then the problem in \eqref{eq_problem} exhibits null duality gap, i.e., $P^* = D^*$.
\end{theorem}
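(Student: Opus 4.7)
\textbf{Proposal for Theorem \ref{theorem_null_duality}.} Weak duality $P^* \le D^*$ is immediate: for any dual feasible $(\bblambda,\bbmu) \ge \bbzero$ and any primal feasible $(\bbp,\bbx)$, the nonnegativity of the Lagrangian multipliers together with the constraints give $g_0(\bbx) \le \ccalL(\bbp,\bbx,\bbmu,\bblambda)$, and taking suprema in the primal then infimum in the dual yields $P^* \le D^*$. The work is to prove the reverse inequality $D^* \le P^*$ despite the nonconvexity of the constraint $\bbx \le \E[\bbf(\bbp(\bbh),\bbh)]$. I would do this via the \emph{perturbation-function} approach, defining
\begin{equation*}
  P(\bbu) \ := \ \sup_{\bbp\in\ccalP,\bbx\in\ccalX,\bbg(\bbx)\ge\bbzero}
     \Big\{\, g_0(\bbx) \ :\  \bbx \le \E[\bbf(\bbp(\bbh),\bbh)] + \bbu \,\Big\},
\end{equation*}
so that $P(\bbzero) = P^*$. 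It is standard that if $P(\bbu)$ is concave in $\bbu$ and finite near $\bbzero$, then $P^* = D^*$: the Slater vector $(\bbp_0,\bbx_0)$ of Assumption \ref{assumption_slater} places $\bbzero$ in the interior of $\mathrm{dom}\, P$, so a supporting hyperplane to the hypograph of the concave function $P$ at $\bbzero$ exists, and its slope is precisely an optimal dual multiplier achieving $D^* = P(\bbzero)$.

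The central step is therefore to \emph{establish concavity of $P(\bbu)$}. The $\bbx$-variable side is trivial because $\ccalX$ is convex and $g_0,\bbg$ are concave. The difficulty lies in the $\bbp$-variable side: for $(\bbp_1,\bbx_1)$ feasible at perturbation $\bbu_1$ and $(\bbp_2,\bbx_2)$ feasible at $\bbu_2$, and for any $\theta\in[0,1]$, I must produce a resource allocation $\bbp_\theta\in\ccalP$ whose expected performance equals $\theta \E[\bbf(\bbp_1,\bbh)] + (1-\theta)\E[\bbf(\bbp_2,\bbh)]$, so that the convex combination $\theta \bbx_1 + (1-\theta)\bbx_2$ is feasible at $\theta\bbu_1 + (1-\theta)\bbu_2$. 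This is where nonatomicity (Assumption \ref{assumption_nonatomic}) enters. I would invoke Lyapunov's convexity theorem applied to the non-atomic vector measure
\begin{equation*}
  A \ \mapsto \ \Big(\, \E[\bbf(\bbp_1(\bbh),\bbh)\,\mathbb{1}_A],\ \E[\bbf(\bbp_2(\bbh),\bbh)\,\mathbb{1}_A],\ \Pr[A]\, \Big)
\end{equation*}
on the measurable subsets $A\subseteq\ccalH$. Its range is convex in $\reals^{2u+1}$, and since the endpoints $A=\emptyset$ and $A=\ccalH$ lie in this range, there exists a measurable $A_\theta\subseteq\ccalH$ with $\Pr[A_\theta]=\theta$ and $\E[\bbf(\bbp_i,\bbh)\mathbb{1}_{A_\theta}] = \theta\E[\bbf(\bbp_i,\bbh)]$ for $i=1,2$. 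Setting $\bbp_\theta(\bbh) := \bbp_1(\bbh)\mathbb{1}_{A_\theta}(\bbh) + \bbp_2(\bbh)\mathbb{1}_{A_\theta^c}(\bbh)$ yields the desired convex combination in expectation, and $\bbp_\theta\in\ccalP$ because $\ccalP$ is a set of (bounded) integrable functions closed under measurable switching between admissible allocations.

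With this convex-combination construction in hand, concavity of $P(\bbu)$ follows by a standard $(\eps,\theta)$ argument on near-optimizers of $P(\bbu_1)$ and $P(\bbu_2)$, and the supporting-hyperplane conclusion finishes the proof. I expect the \emph{main obstacle} to be the clean invocation of Lyapunov's theorem: one must verify that the vector measure above is indeed non-atomic (which follows because $m(\bbh)$ is non-atomic and $\bbf(\bbp_i(\bbh),\bbh)$ is integrable by the boundedness condition on $\ccalP$), and that $\ccalP$ is preserved under the measurable-switching operation. A secondary technical point is handling the $\sup$ in the definition of $P(\bbu)$ when it is not attained, which is resolved by taking $\eps$-optimal perturbation witnesses and sending $\eps\to 0$; since Slater's condition ensures $P(\bbzero)$ is finite and $P$ is upper semicontinuous near $\bbzero$, this causes no loss.
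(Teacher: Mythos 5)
The paper does not prove this theorem itself; it imports it verbatim from \cite[Theorem 1]{ribeiro2012optimal}, whose argument is precisely the one you reconstruct: weak duality plus concavity of the perturbation function, with the nonconvexity of the ergodic constraint neutralized by applying Lyapunov's convexity theorem to the nonatomic vector measure induced by $m(\bbh)$, and Slater's condition supplying the supporting hyperplane whose slope is the optimal multiplier. Your sketch is correct, including the two points you rightly flag as needing verification---nonatomicity of the vector measure (inherited from Assumption \ref{assumption_nonatomic} and the boundedness of $\ccalP$) and closure of $\ccalP$ under measurable switching between admissible allocations, the latter being implicit in the pointwise definition of the feasible policy set used in the reference.
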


With this result in mind, we begin to establish the result in \eqref{eq_theorem_param_duality} by considering the upper bound. First, note that the dual problem of~\eqref{eq_param_problem} defined in~\eqref{eq_param_dual0} can be written as
\begin{multline}\label{eq_param_dual2}
	D_{\bbphi}^* = \min_{\bblambda,\bbmu\geq \bb0}
		\left\{ \max_{\bbx \in \ccalX} g_0(\bbx) + \bbmu^T \bbg(\bbx) - \bblambda^T\bbx \right.
		\\
		{} \left.+ \max_{\bbtheta \in \Theta}
			\bblambda^T\E_{\bbh} \left[ \bbf(\bbphi(\bbh,\bbtheta),\bbh)\right] \right\}
			\text{.}
\end{multline}
Focusing on the second term, observe then that for any solution~$\bbp^*(\bbh)$ of~\eqref{eq_problem}, since~$\ccalP_{\bbphi} \subseteq \ccalP$, it holds that
%%
%\begin{multline}\label{eq_param_dual3}
%	\max_{\bbtheta \in \Theta} \bblambda^T
%		\E_{\bbh} \left[ \bbf(\bbphi(\bbh,\bbtheta),\bbh) \right]
%		= \bblambda^T \E_{\bbh} \left[\bbf(\bbp^*(\bbh),\bbh)\right]
%	\\
%	{}+ \max_{\bbtheta \in \Theta} \bblambda^T
%		\E_{\bbh} \left[ \bbf( \bbphi(\bbh,\bbtheta),\bbh) - \bbf( \bbp^*(\bbh),\bbh) \right]
%		\text{.}
%\end{multline}
%%
%Since~$\ccalP_{\bbphi} \subseteq \ccalP$, it must be that~$g_0(\bbx_\theta^\star) \leq g_0(\bbx^\star)$, where~$\bbx^\star$ and~$\bbx_\theta^\star$ are the maximizers of~\eqref{eq_problem} and~\eqref{eq_param_problem} respectively. Because~$g_0$ is monotonically non-decreasing, the ergodic constraint holds with equality and~$g_0(\bbx_\theta^\star) \leq g_0(\bbx^\star)$ implies that
%%
%\begin{equation*}
%	\E_{\bbh} \left[ \bbf(\bbh,\bbphi(\bbh,\bbtheta^\star)) \right] =
%		\bbx_\theta^\star \leq \bbx^\star
%		= \E_{\bbh} \left[ \bbf(\bbp^*(\bbh),\bbh) \right]
%		\text{,}
%\end{equation*}
%%
%where~$\bbtheta^\star$ is a solution of~\eqref{eq_param_problem}. By optimality, it holds that~$\E_{\bbh} \left[ \bbf(\bbphi(\bbh,\bbtheta),\bbh) - \bbf(\bbp^*(\bbh),\bbh) \right] \leq 0$ for all~$\bbtheta \in \Theta$. Since~$\bblambda \geq \bb0$, \eqref{eq_param_dual3} yields
%
\begin{equation}\label{eq_param_dual4}
	\max_{\bbtheta \in \Theta} \bblambda^T
		\E_{\bbh} \left[ \bbf(\bbphi(\bbh,\bbtheta),\bbh) \right]
	\leq \bblambda^T \E_{\bbh} \left[\bbf(\bbp^*(\bbh),\bbh)\right]
		\text{.}
\end{equation}
Substituting~\eqref{eq_param_dual4} back into~\eqref{eq_param_dual2} and using the strong duality result from Theorem~\ref{theorem_null_duality}, we obtain
\begin{multline}\label{eq_param_dual5}
	D_{\bbphi}^* \leq \min_{\lambda,\bbmu \geq \bb0}
		\max_{\bbx \in \ccalX} g_0(\bbx) + \bbmu^T \bbg(\bbx) - \bblambda^T\bbx
	\\
	{}+ \bblambda^T \E_{\bbh} \left[ \bbf(\bbp^*(\bbh),\bbh) \right] = D^* = P^*
		\text{,}
\end{multline}
where we used the fact that the right-hand side of the inequality in~\eqref{eq_param_dual5} is the optimal dual value of problem~\eqref{eq_problem} as defined in~\eqref{eq_dual}.

We prove the lower bound in~\eqref{eq_theorem_param_duality} by proceeding in a similar manner, i.e., by manipulating the expression of the dual value in~\eqref{eq_param_dual2}. In contrast to the previous bound, however, we obtain a perturbed version of~\eqref{eq_problem} which leads to the desired bound. Explicitly, notice that for all~$\bbp \in \ccalP$ it holds that
\begin{multline}\label{eq_param_dual1b}
	\max_{\bbtheta \in \Theta} \bblambda^T
		\E_{\bbh} \left[ \bbf(\bbphi(\bbh,\bbtheta),\bbh) \right]
	= \bblambda^T \E_{\bbh} \left[\bbf(\bbp(\bbh),\bbh) \right]
	\\
	{}- \min_{\bbtheta \in \Theta} \bblambda^T
		\E_{\bbh} \left[ \bbf( \bbp(\bbh),\bbh) - \bbf( \bbphi(\bbh,\bbtheta),\bbh) \right]
		\text{,}
\end{multline}
where we used the fact that for any~$f_0$ and~$\ccalY$, it holds that~$\max_{y \in \ccalY} f_0(y) = -\min_{y \in \ccalY} -f_0(y)$. Then, apply H\"older's inequality to bound the second term in~\eqref{eq_param_dual1b} as
\begin{multline}\label{eq_param_dual2b}
	\min_{\bbtheta \in \Theta} \bblambda^T
		\E_{\bbh} \left[ \bbf( \bbp(\bbh),\bbh) - \bbf( \bbphi(\bbh,\bbtheta),\bbh) \right]
	\\
	{}\leq \|\bblambda\|_{1} \left[ \min_{\bbtheta \in \Theta}
		\|\E_{\bbh} \left[ \bbf( \bbp(\bbh),\bbh)
			- \bbf( \bbphi(\bbh,\bbtheta),\bbh) \right]\|_{\infty} \right]
		\text{.}
\end{multline}
To upper bound the minimization in~\eqref{eq_param_dual2b}, start by using the convexity of the infinity norm and the continuity of~$\E_{\bbh}\bbf(\bbh, \cdot)$ to obtain
\begin{multline*}
	\min_{\bbtheta \in \Theta} \|\E_{\bbh} \left[ \bbf( \bbp(\bbh),\bbh)
			- \bbf( \bbphi(\bbh,\bbtheta),\bbh) \right]\|_{\infty}
	\\
	{}\leq \min_{\bbtheta \in \Theta} \E_{\bbh} \left[ \| \bbf( \bbp(\bbh),\bbh)
			- \bbf( \bbphi(\bbh,\bbtheta),\bbh) \|_{\infty} \right]
	\\
	{}\leq \min_{\bbtheta \in \Theta}
		\E_{\bbh} \left[ L \| \bbp(\bbh) - \bbphi(\bbh,\bbtheta) \|_{\infty} \right]
		\text{.}
\end{multline*}
The definition in~\eqref{eq_def_bound} then readily gives
\begin{equation}\label{eq_param_dual3b}
	\min_{\bbtheta \in \Theta} \|\E_{\bbh} \left[ \bbf( \bbp(\bbh),\bbh)
			- \bbf( \bbphi(\bbh,\bbtheta),\bbh) \right]\|_{\infty}
		\leq L \epsilon
		\text{.}
\end{equation}
Substituting~\eqref{eq_param_dual2b} and~\eqref{eq_param_dual3b} into~\eqref{eq_param_dual1b} yields
\begin{equation*}
	\max_{\bbtheta \in \Theta} \bblambda^T
		\E_{\bbh} \left[ \bbf(\bbphi(\bbh,\bbtheta),\bbh) \right]
	\geq \bblambda^T \E_{\bbh} \left[\bbf(\bbp(\bbh),\bbh) \right]
	- \|\bblambda\|_{1} L \epsilon
		\text{,}
\end{equation*}
which we can then use in the definition of the dual value~\eqref{eq_param_dual2} to obtain
\begin{multline}\label{eq_param_dual4b}
	D_{\bbphi}^* \geq \min_{\bblambda,\bbmu\geq \bb0}
		\max_{\bbx \in \ccalX} g_0(\bbx) + \bbmu^T \bbg(\bbx) - \bblambda^T\bbx
		\\
		{}+ \bblambda^T \E_{\bbh} \left[\bbf(\bbp(\bbh),\bbh) \right]
	- \|\bblambda\|_{1} L \epsilon
			\text{.}
\end{multline}

We are now ready to derive the perturbed version of~\eqref{eq_problem} in order to obtain our lower bound. To do so, notice that~$\bblambda \geq 0$ implies that~$\| \bblambda \|_1 = \bblambda^T \mathbf{1}$, where~$\mathbf{1}$ is a column vector of ones. Since~\eqref{eq_param_dual4b} holds for all~$\bbp \in \ccalP$, we get
\begin{multline}\label{eq_param_dual5b}
	D_{\bbphi}^* \geq \min_{\bblambda,\bbmu\geq \bb0}
		\max_{\bbx \in \ccalX} g_0(\bbx) + \bbmu^T \bbg(\bbx) - \bblambda^T\bbx
		\\
		{}+ \max_{\bbp \in \ccalP} \bblambda^T
			\left\{ \E_{\bbh} \left[\bbf(\bbp(\bbh),\bbh) \right]
			- L \epsilon \mathbf{1} \right\}
			\text{.}
\end{multline}
Now, observe that the right-hand side of~\eqref{eq_param_dual5b} is the dual value of an~$(L\epsilon)$-perturbed version of~\eqref{eq_problem}
\begin{align}\label{eq_problem_b}
	P_{L\epsilon}^* := &\max_{\bbp,\bbx} C (\bbx)
	\nonumber\\
	&\st \ L \epsilon \mathbf{1} + \bbx \leq \E_{\bbh} \left[ \bbf(\bbp(\bbh),\bbh)\right],
		\quad \bb0 \leq \bbg(\bbx),
	\nonumber \\ 
	&\quad \quad \ \bbx \in \ccalX, \quad \bbp \in \ccalP
\end{align}
Naturally, \eqref{eq_problem_b} has the same strong duality property as~\eqref{eq_problem} from Theorem~\ref{theorem_null_duality}, which implies that~$D_{\bbphi}^* \geq D_{L\epsilon}^* = P_{L\epsilon}^*$. A well-known perturbation inequalitiy, e.g., \cite[Eq.~(5.57)]{boyd2004convex}, relates $P_{L\epsilon}^*$ to $P^*$ as
\begin{align}\label{eq_pllol}
P_{L\epsilon}^* \geq P^* -  \|\bblambda^*\|_1 L \eps.
\end{align}
Combining \eqref{eq_pllol} with $D_{\bbphi}^* \geq P_{L\epsilon}^*$, we obtain \eqref{eq_theorem_param_duality}.
%} \blue{Explain. Show the equations.}

We proceed to prove the bound in \eqref{eq_lambda_norm_bound}. Note that the strong duality result in Theorem~\ref{theorem_null_duality} implies that
\begin{equation}\label{E:preDualVarBound}
\begin{aligned}
	P^* = D^* &=
		\max_{\bbp \in \ccalP, \bbx \in \ccalX}
		g_0(\bbx) + {\bbmu^*}^T \bbg(\bbx)
	\\
	{}&+ {\bblambda^*}^T \left(
			\E_{\bbh} \left[ \bbf(\bbp(\bbh),\bbh) \right] - \bbx
		\right)
	\\
	{}&\geq g_0(\bbx^\prime) + {\bbmu^*}^T \bbg(\bbx^\prime)
	\\
	{}&+ {\bblambda^*}^T \left(
			\E_{\bbh} \left[ \bbf(\bbh,\bbp^\prime(\bbh)) \right]
			- \bbx^\prime
		\right)
		\text{,}
\end{aligned}
\end{equation}
where~$(\bblambda^*, \bbmu^*)$ are the minimizers of~\eqref{eq_dual} and~$(\bbx^\prime, \bbp^\prime)$ are arbitrary feasible points of~\eqref{eq_problem}. Since Slater's condition holds, we can choose~$(\bbx^\prime, \bbp^\prime)$ to be strictly feasible, i.e., such that~$\bbg(\bbx^\prime) > \bb0$ and~$\E_{\bbh} \left[ \bbf(\bbh,\bbp^\prime(\bbh)) \right] > \bbx^\prime$, to obtain
\begin{align}
	P^* &\geq g_0(\bbx^\prime) + {\bblambda^*}^T \left(
			\E_{\bbh} \left[ \bbf(\bbh,\bbp^\prime(\bbh)) \right]
			- \bbx^\prime
		\right)
	\notag\\
	{}&\geq g_0(\bbx^\prime) + {\bblambda^*}^T \mathbf{1} \cdot s
		\text{,}
		\label{E:preDualVarBound2}
\end{align}
where~$s = \min_i \E_{\bbh} \left[ f_i(\bbh,\bbp^\prime(\bbh)) \right] - x_i^\prime$ as defined in Assumption \ref{assumption_slater}, with~$\bbf = [f_i]$ and~$\bbx^\prime = [x_i]$. Note that~$s > 0$ since~$\bbx^\prime$ is strictly feasible. Finally, since~$\bblambda^* \geq 0$, we can rearrange~\eqref{E:preDualVarBound2} to obtain
\begin{equation}
	\norm{\bblambda^*}_1 \leq \frac{P^* - g_0(\bbx^\prime)}{s}
		\text{.}
\end{equation}

\section{Proof of Theorem \ref{cor_param_duality}}\label{sec_cor_proof}

We start by presenting the well-established result that a DNN of arbitrarily large size is a universal parameterization for measurable functions in probability.

\begin{theorem}\cite[Theorem 2.2]{hornik1991approximation}\label{theorem_approx}
Define~$\ccalD = \{\bbphi(\cdot,\bbtheta): \bbtheta \in \reals^q\}$ to be the set of all functions described by the DNN in \eqref{eq_policy_dnn} with $\bm{\sigma}_{l}$ non-constant and continuous for all $l = 1,\dots,L$. Then, for an arbitrarily large number~$q$ of hidden nodes, $\ccalD$ is dense in probability in the set of measurable functions $\ccalM$, i.e., for every function $\hbp(\bbh) \in \ccalM$ and all $\tilde{\eps} > 0$, there exists a $q > 0$ and $\bbtheta \in \reals^q$ such that
\begin{align}\label{eq_lol1}
	m\left( \{ \bbh \in \ccalH :
		\| \hbp(\bbh) - \bbphi(\bbh,\bbtheta) \|_{\infty} > \tilde{\eps} \} \right)
		< \tilde{\eps}.
\end{align}
\end{theorem}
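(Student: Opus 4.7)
The plan is to proceed in three stages: (i) reduce the $L$-layer, multi-output statement to the scalar, single-hidden-layer case; (ii) prove density of single-hidden-layer scalar networks in $C(K)$ for every compact $K\subset\ccalH$, under uniform convergence on compacta; (iii) upgrade this to density in the measurable functions under convergence in probability as stated in \eqref{eq_lol1}. Throughout, I would exploit the inner regularity (tightness) of the probability measure $m$ on $\ccalH$.

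For the reduction, every single-hidden-layer network with activation $\bm{\sigma}_1$ can be realized---up to arbitrarily small uniform error on any compact set---as an $L$-layer network by choosing the remaining layers to approximate the identity on the range of the previous layer. Since each $\bm{\sigma}_l$ is non-constant and continuous, once single-layer density in $C(K)$ is established such identity approximations are available, so the $L$-layer class contains the $1$-layer class in its topological closure. Multiple outputs are handled component by component and assembled through the final linear map. This reduces the problem to showing that $\Sigma_\sigma := \{\sum_{i=1}^N \alpha_i \sigma(\bbw_i^T \bbh + b_i)\}$ is dense in $C(K)$ in the sup norm for any non-constant continuous $\sigma$.

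For the density step I would invoke a Hahn--Banach / Riesz representation argument: if $\overline{\Sigma_\sigma} \subsetneq C(K)$, there exists a nonzero signed Borel measure $\nu$ on $K$ annihilating $\Sigma_\sigma$, i.e., $\int \sigma(\bbw^T \bbh + b)\, d\nu(\bbh) = 0$ for every $\bbw \in \reals^n$ and $b \in \reals$. Viewing the left side as a function of $b$ for each fixed direction $\bbw$, one rewrites it as the convolution of $\sigma$ (possibly after a mollification step to gain regularity) with the image measure of $\nu$ along $\bbw$. Non-constancy of $\sigma$ then forces each directional image measure of $\nu$ to vanish via a Fourier/harmonic-analytic argument, and the Cram\'er--Wold device gives $\nu \equiv 0$, a contradiction. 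This follows the strategy of the cited result \cite{hornik1991approximation}.

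For the upgrade to convergence in probability, fix $\hbp \in \ccalM$ and $\tilde\eps > 0$. By tightness, choose compact $K \subset \ccalH$ with $m(\ccalH \setminus K) < \tilde\eps / 2$. Apply Lusin's theorem componentwise to produce a continuous $\bbg$ on $K$ agreeing with $\hbp$ off a set of $m$-measure $< \tilde\eps / 2$, extend $\bbg$ continuously via Tietze, and use Step~2 to pick $\bbtheta$ with $\sup_{\bbh \in K} \|\bbg(\bbh) - \bbphi(\bbh,\bbtheta)\|_\infty \leq \tilde\eps$. Then $\{\bbh : \|\hbp(\bbh) - \bbphi(\bbh,\bbtheta)\|_\infty > \tilde\eps\} \subseteq (\ccalH \setminus K) \cup \{\bbh \in K : \bbg(\bbh) \neq \hbp(\bbh)\}$, which has $m$-measure $< \tilde\eps$, yielding \eqref{eq_lol1}. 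The main obstacle is Step~2: under only non-constancy and continuity of $\sigma$, Stone--Weierstrass is unavailable (as $\Sigma_\sigma$ is not an algebra), and killing the annihilating measure $\nu$ demands a careful regularization of $\sigma$ before invoking harmonic-analytic tools; stronger hypotheses such as boundedness or integrability of $\sigma$ would simplify this step considerably, and the literature following Hornik refines the argument precisely at this point.
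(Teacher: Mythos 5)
First, note that the paper does not actually prove this statement: it is imported verbatim as a citation to Hornik's universal approximation theorem, so there is no internal proof to compare yours against. Your attempt is therefore a from-scratch reconstruction of an external result, and the overall architecture you choose---(i) reduction of the $L$-layer, vector-valued case to a scalar single-hidden-layer class, (ii) density of that class in $C(K)$ via a Hahn--Banach annihilating-measure argument, and (iii) the upgrade from uniform approximation on compacta to convergence in probability via tightness, Lusin, and Tietze---is exactly the standard route in the literature. Stage (iii) is complete and correct as written, and the reduction in stage (i) is sound in spirit (approximate identity maps layer by layer on the relevant compact ranges, which is available once single-layer density is known).

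The genuine gap is in stage (ii), and you have correctly located but not closed it. Under the hypotheses as stated---$\sigma$ merely non-constant and continuous---the density claim is in fact \emph{false}: take $\sigma(t)=t^2$, which is non-constant and continuous, yet every single-hidden-layer network is then a quadratic polynomial in $\bbh$, and these are dense neither in $C(K)$ nor in probability in $\ccalM$. Correspondingly, your annihilator step breaks precisely there: a nonzero signed measure $\nu$ orthogonal to all polynomials of degree at most two annihilates your class $\Sigma_\sigma$, so non-constancy alone cannot force $\nu\equiv 0$, and no amount of mollification or Fourier analysis will rescue the implication. Hornik's actual theorem assumes the activation is bounded (and nonconstant); the sharp characterization for continuous activations is that density holds if and only if $\sigma$ is not a polynomial (Leshno, Lin, Pinkus, and Schocken). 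To complete your proof you must either import one of these stronger hypotheses---which every activation the paper actually uses (ReLU, sigmoid, softplus) satisfies---or carry out the non-polynomiality argument explicitly; as written, the harmonic-analytic step you gesture at does not go through from non-constancy alone.
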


Using Theorem \ref{theorem_approx}, we can show that DNNs satisfy the~$\eps$-universality condition from Definition~\ref{def_universal} for all~$\eps > 0$.

\begin{lemma}\label{lemma_eps_univeral}
The entire class of DNN parameterizations $\bbphi \in \Phi$, where $\bbphi$ is defined in \eqref{eq_policy_dnn} with non-constant, continuous activation~$\bm{\sigma}_{l}$ for some layer size $\bbq > \bb0$, is an~$\eps$-universal parametrization as in Definition~\ref{def_universal} for all~$\eps > 0$.
\end{lemma}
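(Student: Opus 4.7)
The plan is to upgrade the convergence-in-probability statement of Theorem~\ref{theorem_approx} to the $L^1$-type statement required by Definition~\ref{def_universal}. The crucial observation is the standing boundedness hypothesis on $\ccalP$: since $\ccalP$ consists of bounded functions, there is a constant $B>0$ with $\|\bbp(\bbh)\|_{\infty}\leq B$ for $m$-almost every $\bbh$ and every $\bbp\in\ccalP$. Fix $\bbp\in\ccalP$ and the target accuracy $\eps>0$, and set $\tilde{\eps}:=\eps/(1+2B)$.

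First, I would invoke Theorem~\ref{theorem_approx}: for this $\tilde{\eps}$, there exists a layer vector $\bbq$ and parameter $\bbtheta$ producing a DNN $\bbphi_{\bbq}(\cdot,\bbtheta)$ such that the ``bad set'' $A:=\{\bbh\in\ccalH : \|\bbp(\bbh)-\bbphi_{\bbq}(\bbh,\bbtheta)\|_{\infty}>\tilde{\eps}\}$ satisfies $m(A)<\tilde{\eps}$.

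Second, I would ensure $\bbtheta\in\Theta$, i.e.\ that the DNN output itself lies in $\ccalP$. Since Hornik's theorem allows any non-constant continuous output activation, one may pick $\bm{\sigma}_L$ to clip the output into the bounded range containing $\ccalP$ (e.g.\ a scaled sigmoid). Equivalently, a pointwise projection of $\bbphi_{\bbq}(\bbh,\bbtheta)$ onto the bounded set of admissible resource values does not increase $\|\bbp(\bbh)-\bbphi_{\bbq}(\bbh,\bbtheta)\|_{\infty}$, because $\bbp(\bbh)$ is already in that set and projection onto a box is non-expansive coordinatewise (hence in the sup-norm). Either way we may assume $\|\bbphi_{\bbq}(\bbh,\bbtheta)\|_{\infty}\leq B$ everywhere while preserving the conclusion of the previous step.

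Third, I would split the expected sup-norm discrepancy over $A$ and its complement. On $A^{c}$ the pointwise bound gives integrand at most $\tilde{\eps}$, while on $A$ the triangle inequality together with the uniform bound $B$ on both $\bbp$ and $\bbphi_{\bbq}$ yields integrand at most $2B$. Therefore
\begin{equation*}
\E\|\bbp(\bbh)-\bbphi_{\bbq}(\bbh,\bbtheta)\|_{\infty}
\;\leq\; 2B\cdot m(A) + \tilde{\eps}\cdot m(A^{c})
\;\leq\; (1+2B)\,\tilde{\eps}\;=\;\eps,
\end{equation*}
which is exactly \eqref{eq_def_bound}. Since $\bbp\in\ccalP$ and $\eps>0$ were arbitrary, $\Phi$ is $\eps$-universal for every $\eps>0$.

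The only genuine subtlety is the second step: Theorem~\ref{theorem_approx} as stated produces an \emph{unconstrained} DNN, whereas Definition~\ref{def_universal} requires $\bbtheta\in\Theta$. Handling this via a bounded output nonlinearity (or a non-expansive coordinatewise clip that preserves Hornik's approximation guarantee) is the one place the argument interacts with the specific structure of $\ccalP$; the rest is the routine passage from convergence in probability to $L^{1}$ convergence for a uniformly bounded sequence.
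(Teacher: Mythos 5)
Your proof is correct and follows essentially the same route as the paper's: split the expected sup-norm error over the ``bad set'' from Hornik's theorem and its complement, bound the first piece by the uniform bound on $\ccalP$ times the small measure of the bad set and the second by $\tilde\eps$, and choose $\tilde\eps = \eps/(1+2B)$ (the paper uses the identical constant with $\Gamma$ in place of $B$). Your second step --- explicitly forcing the DNN output into the feasible set via a bounded output activation or a coordinatewise non-expansive clip so that $\bbtheta\in\Theta$ and the $2B$ bound on the bad set is legitimate --- is a detail the paper's proof leaves implicit, and it is a worthwhile addition rather than a deviation.
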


\begin{proof}

Let~$\ccalK_{\eps^\prime} = \{ \bbh \in \ccalH : \| \hbp(\bbh) - \bbphi(\bbh,\bbtheta) \|_{\infty} > \eps^\prime \} \subseteq \ccalH$ and observe that~\eqref{eq_def_bound} can be written as
\begin{align*}
	\E \left\| \bbp(\bbh) - \bbphi(\bbh,\bbtheta) \right\|_{\infty} &=
		\int_{\ccalH \setminus \ccalK_{\eps^\prime}}
			\left\| \bbp(\bbh) - \bbphi(\bbh,\bbtheta) \right\|_{\infty} dm(\bbh)
	\\
	{}&+ \int_{\ccalK_{\eps^\prime}}
		\left\| \bbp(\bbh) - \bbphi(\bbh,\bbtheta) \right\|_{\infty} dm(\bbh)
		\text{,}
\end{align*}
where~$m$ is the probability measure from which the channel state is drawn. It is ready that the first integral is upper bounded by~$\eps^\prime \cdot m(\ccalH \setminus \ccalK_{\eps^\prime}) < \eps^\prime \cdot m(\ccalH) < \eps^\prime$ for all~$\eps^\prime > 0$. To bound the second integral, recall that the set of feasible policies~$\ccalP$ is bounded and let~$\Gamma = \sup\{\|\hbp(\bbh)\|_{\infty} : \hbp \in \ccalP \text{ and } \bbh \in \ccalH\} < \infty$. Then, from~\eqref{eq_lol1}, we obtain the following bound over all DNNs $\bbphi$, i.e.
\begin{equation*}
	\inf_{\bbphi \in \Phi} \int_{\ccalK_{\eps^\prime}}
		\left\| \bbp(\bbh) - \bbphi(\bbh,\bbtheta) \right\|_{\infty} dm(\bbh)
		< 2 \Gamma \cdot m(\ccalK_{\eps^\prime}) < 2 \Gamma \eps^\prime.
\end{equation*}
Thus, for all~$\eps^\prime > 0$,
\begin{equation}\label{eq_dnn_bound}
	\inf_{\bbphi \in \Phi} \E \left\| \bbp(\bbh) - \bbphi(\bbh,\bbtheta) \right\|_{\infty} < (1 + 2\Gamma) \eps^\prime.
\end{equation}
Taking~$\eps^\prime = \eps/(1+2\Gamma)$ in~\eqref{eq_dnn_bound} yields~\eqref{eq_def_bound}.
\end{proof}

Lemma~\ref{lemma_eps_univeral} implies that the dual value bound~\eqref{eq_theorem_param_duality} from Theorem~\ref{theorem_param_duality} holds for all~$\epsilon > 0$ if we consider the entire class of DNN functions $\bbphi \in \Phi$. Since the Lipschitz constant~$L < \infty$, the only obstacle to completing a continuity argument is if~$\norm{\bblambda^*}_1$ is unbounded. However, recall from \eqref{eq_lambda_norm_bound} that
\begin{equation}
	\norm{\bblambda^*}_1 \leq \frac{P^* - g_0(\bbx_0)}{s}
		< \infty
		\text{.}
\end{equation}
Hence, we obtain that
\begin{equation*}
	P^* - \delta \leq \inf_{\bbphi \in \Phi} D^*_{\bbphi} \leq P^*
\end{equation*}
for all~$\delta > 0$~(simply take~$\epsilon = \delta \norm{\bblambda^*}_1^{-1} L^{-1} > 0$). Then, there would exist~$\delta^\prime > 0$ such that~$P^* > D^*_{\bbphi} + \delta^\prime$~(e.g., take $\delta^\prime$ to be the midpoint between~$P^*$ and~$D^*_{\bbphi}$), which would contradict Theorem~\ref{theorem_param_duality}. Hence, $\inf_{\bbphi} D^*_{\bbphi} = P^*$.

%By continuity, it must be that~$D^*_{\bbphi} = P^*$. Indeed, suppose that~$D^*_{\bbphi} < P^*$. Then, there would exist~$\delta^\prime > 0$ such that~$P^* > D^*_{\bbphi} + \delta^\prime$~(e.g., take $\delta^\prime$ to be the midpoint between~$P^*$ and~$D^*_{\bbphi}$), which would contradict Theorem~\ref{theorem_param_duality}. Hence, $D^*_{\bbphi} = P^*$.

% References should be produced using the bibtex program from suitable
% BiBTeX files (here: strings, refs, manuals). The IEEEbib.bst bibliography
% style file from IEEE produces unsorted bibliography list.
% -------------------------------------------------------------------------
\urlstyle{same}
\bibliographystyle{IEEEtran}
\bibliography{wireless_learning}

\end{document}